\documentclass{article}

\PassOptionsToPackage{numbers, compress}{natbib}

\usepackage[main, final]{neurips_2026}

\usepackage[utf8]{inputenc} 
\usepackage[T1]{fontenc}    
\usepackage{hyperref}       
\usepackage{url}            
\usepackage{booktabs}       
\usepackage{amsfonts}       
\usepackage{nicefrac}       
\usepackage{microtype}      
\usepackage{xcolor}         

\usepackage{amsmath}
\usepackage{amssymb}
\usepackage{mathtools}
\usepackage{amsthm}
\usepackage{ulem}
\usepackage{comment}
\usepackage{capt-of}
\usepackage{wrapfig}

\usepackage[capitalize,noabbrev]{cleveref}

\theoremstyle{plain}
\newtheorem{theorem}{Theorem}[section]

\newtheorem{lemma}[theorem]{Lemma}

\theoremstyle{definition}

\newtheorem{assumption}[theorem]{Assumption}
\theoremstyle{remark}

\title{Revealing Epistemic Uncertainty in MLLMs via Causal-Invariant Masking}

\author{%
Haoyang Luo\\
Computer Science\\
City University of Hong Kong\\
\And
Linwei Tao \\
Machine Learning Research\\
Apple\\
\texttt{}
\AND
Jie Gui \\
Cyber Science and Engineering\\
Southeast University\\
\texttt{}
\And
Xinghao Chen\\
Huawei Noah's Ark Lab\\
\texttt{}
\AND
Chang Xu\\
Computer Science\\
University of Sydney\\
\texttt{}
\And
Jianyuan Guo\\
Computer Science\\
City University of Hong Kong\\
\texttt{}
\And
Minjing Dong\\
Computer Science\\
City University of Hong Kong\\
\texttt{}
}

\begin{document}

\maketitle

\begin{abstract}
Multimodal Large Language Models (MLLMs) suffer from hallucinations, creating a critical need for Uncertainty Quantification (UQ) to ensure reliable deployment. However, existing approaches struggle to detect uncertainty caused by superficial associations, especially when the query-relevant signal is weak. We mainly attribute this issue to their bias toward aleatoric uncertainty arising from data ambiguity, overlooking epistemic uncertainty stemming from model limitations. To further decompose uncertainty types for a comprehensive UQ, we propose Causal-Invariant Masking (\textbf{CIM}), which measures the semantic shift between the original predictions and those conditioned on a causally-focused view. Based on this framework, we introduce Semantic Divergence as our core metric for UQ and provide theoretical evidence that it converges to the variance of model's sensitivity to non-causal correlations, establishing its ability to capture MLLM's limitation. To accelerate UQ in MLLMs, we further propose Expected Embedding Drift (EED), a fast geometric proxy metric that estimates semantic shift directly within the hyperspherical embedding space. Experiments show that our method achieves state-of-the-art performance on various benchmarks, while the proposed EED accelerates by nearly 50\% with comparable performance.
\end{abstract}

\section{Introduction}
\label{sec:intro}

Large Language Models (LLMs)~\cite{zhao2023survey,minaee2024large} and Multimodal Large Language Models (MLLMs)~\cite{yin2024survey,bai2024survey} have significantly altered the landscape of artificial intelligence.
By connecting visual perception with textual semantics, MLLMs are able to demonstrate consistent success in tasks ranging from visual question answering to complex reasoning~\cite{liu2023visual,peng2025mv,zhang2025enhancing}. 
However, their reliable deployment in real-world applications requires solving the severe problem of hallucination~\cite{bai2024hallucination,huang2025survey}, where MLLMs confidently generate factually incorrect but grammatically plausible content.
Especially for VLMs, non-existent objects, attributes, and events may unreasonably occur in the responses when misleading factors, either in vision or language, dominate the query.
It suggests that MLLMs do not internally realize their visual capability boundary and highlights a critical need for robust safety measures targeting generative trustworthiness.

With this aim, numerous Uncertainty Quantification (UQ) methods have been developed. By estimating the confidence level of a model, systems can abstain from answering or flag potential hallucinations when encountering high uncertainty. Therefore, precise UQ becomes a proxy for hallucination detection.
Traditional UQ approaches typically measure the entropy of the predictive distribution, often aggregated at the token level~\cite{xiongcan, tao2025revisiting}. Recent advancements~\cite{kuhnsemantic, kossen2024semantic, nikitin2024kernel} have shifted towards measuring \textit{semantic uncertainty} using sampling-based methods, such as Semantic Entropy (SE)~\mbox{\cite{farquhar2024detecting}}, which aggregates probabilities over clusters of similar generations to calculate the response variety. 

However, these approaches exhibit limited discrimination under hallucinative visual content, especially when the query-irrelevant elements generate spurious hints. For instance, background objects with prominent colors can dominate the response to queries on the foreground object, making consistent mistakes in the sampled output.
Thus, a more comprehensive UQ is necessary for MLLMs. In fact, both \textit{epistemic} and \textit{aleatoric} uncertainties play important roles in UQ, where epistemic uncertainty reveals insufficiency in the model's perception or comprehension and aleatoric uncertainty reflects the poor data quality or incomplete information~\cite{hullermeier2021aleatoric}. Quantifying both of them is crucial in UQ~\cite{shorinwa2025survey, ahdritz2024distinguishing}, however, existing approaches fail to do so because they typically assume that all forms of uncertainty can be equally estimated. In this paper, we show that they are biased towards quantifying mostly aleatoric uncertainty in MLLMs, leaving epistemic uncertainty underestimated.

We introduce Causal-Invariant Masking (\textbf{CIM}), a framework that disentangles intrinsic epistemic instability from data ambiguity by probing semantic consistency under causally invariant transforms. Our core insight is that robust reasoning must remain invariant when task-irrelevant features are removed. By constructing a Region-of-Interest (RoI) view that retains only causal features, we measure the shift in the model's belief distribution. We quantify this using Semantic Divergence, defined as the shift from the semantic distributions of the original to the focused inputs. 
Supported by theoretical derivation, this metric targets the model stability against irrelevant distractions to effectively isolate epistemic uncertainty, thereby alleviating the biased quantification in previous works. 
To address the computational costs of discrete semantic clustering, we then propose a fast proxy called Expected Embedding Drift (EED). Leveraging the hyperspherical geometry of the embedding space, EED estimates semantic shift directly from token embeddings, eliminating costly discrete operations. Experiments demonstrate that our method significantly outperforms baselines, achieving state-of-the-art hallucination detection of \textbf{76.8\%} AUROC on the challenging AdVQA and superior results on POPE. Furthermore, EED reduces computational overhead by ~\textbf{50\%} without compromising performance, facilitating applications such as hallucination mitigation and adaptive-resolution inference.

\section{Related Work}
\label{sec:rel_work}

\subsection{Uncertainty Estimation in LLMs}

UQ in LLMs is pivotal for reliability and hallucination detection, generally categorized into intrinsic, verbalized, and consistency-based methods. Intrinsic metrics like perplexity~\cite{shorinwa2025survey} measure token-level confidence, while verbalized methods~\cite{xiong2023can, tian2023just} prompt models to explicitly express uncertainty, often yielding better calibration than raw probabilities. Consistency-based works are currently dominant approaches. Semantic Entropy~\cite{farquhar2024detecting} and Semantic Dispersion~\cite{lin2024generating} detect hallucinations by aggregating multiple sampled responses based on meaning rather than exact phrasing. Recent works further exploit internal representations as uncertainty indicators by utilizing activation covariance and propose EigenScore~\cite{cheninside} or geometric convex hull analysis~\cite{catak2024uncertainty} to quantify dispersion in high-dimensional embedding spaces. However, these methods target general UQ for LLMs and can be challenged in vision-language tasks.

\subsection{Uncertainty Estimation in MLLMs}
Extending UQ to MLLMs involves challenges from combined visual and textual modalities, often focusing on hallucination detection in vision-language tasks. Similar to those for LLMs, consistency-based sampling techniques are explored, \textit{e.g.},~\cite{lau2025uncertainty} develops UMPIRE, a training-free UQ framework that estimates consistency in multiple sampled responses by a coherence-adjusted semantic volume. Perturbation-based methods introduce variations in inputs to probe uncertainty. VL-Uncertainty~\cite{zhang2024vl} and Uncertainty-o~\cite{zhang2025uncertainty} measure hallucination in LVLMs by computing entropy over clustered responses, but using semantic-equivalently perturbed visual and textual queries, supporting tasks of hallucination mitigation and uncertainty-aware reasoning. However, these approaches typically rely on \textit{global} perturbations under the full context, without explicitly isolating the model's reliance on critical and irrelevant features. CIM is also a perturbation-style test in its implementation, but its perturbation is query-conditioned and RoI-preserving. The goal is to compare the original belief with a view that keeps the answer evidence while suppressing irrelevant context, rather than purely testing variance under general information loss.
\section{Method}
\label{sec:method}

\subsection{Preliminary}
\label{sec:preliminary}

\begin{figure*}[htbp]
\vskip 0.2in
\begin{center}
\centerline{
\hfill
\includegraphics[width=0.33\textwidth]{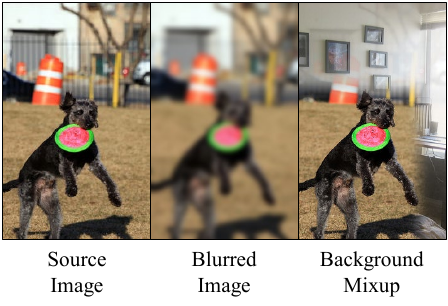}
\hfill
\includegraphics[width=0.335\textwidth]{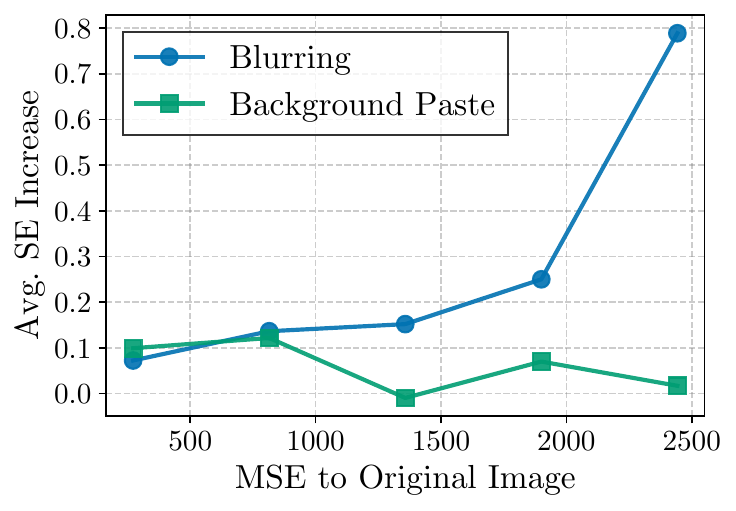}
\hfill
\includegraphics[width=0.313\textwidth]{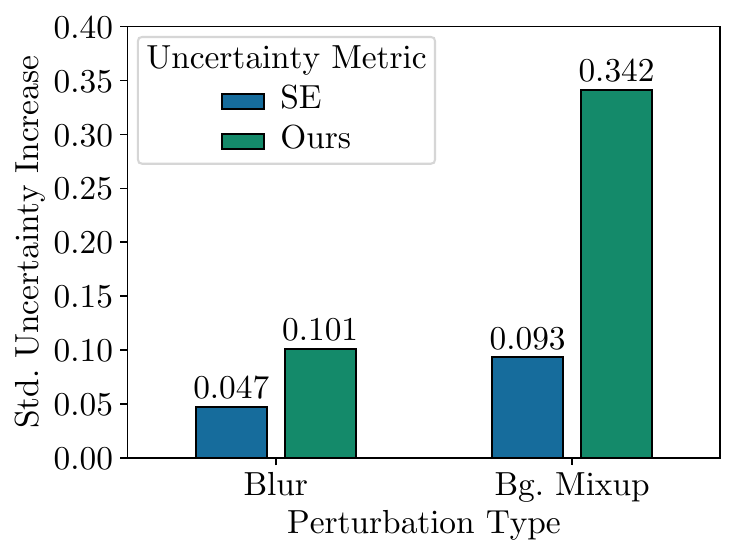}
\hfill
}
\caption{\textbf{Left:} A case of blurred and background-mixup image pair. \textbf{Middle:} Sensitivity of Semantic Entropy to visual perturbations in MLLMs. Similar levels of error to the original image are binned into the same point. \textbf{Right:} Comparison of uncertainty increase under blurring and background mixup perturbations. While Semantic Entropy increases slightly across both, it fails to distinguish hallucination. In contrast, our proposed Semantic Divergence remains robust to visual blur but exhibits a distinct spike under hallucinative perturbations. The values are normalized by the standard deviation of each metric.}
\label{fig:combined_perturbation_analysis}
\end{center}
\vskip -0.2in
\end{figure*}

We consider an MLLM parameterized by $\theta$ over visual space $\mathcal{V}$ and vocabulary $\mathcal{W}$. Given an image $v$ and query $q$, it generates a token sequence $y = (y_1, \dots, y_T)$ autoregressively via $P_\theta(y | v, q) = \prod_{t=1}^T P_\theta(y_t | y_{<t}, v, q)$. For simplicity, we henceforth omit $q$ and $\theta$, denoting the distribution as $P(y|v)$.
Token-level entropy, \textit{i.e.}, $\mathcal{H}(y) = -\sum P(y)\log P(y)$, or perplexity values, \textit{i.e.}, $\text{PPL}(y) = \exp\left(-\frac{1}{T}\sum_{t=1}^T \log P(y_t | y_{<t}, v)\right)$, often misestimate uncertainty because semantically equivalent outputs are treated as distinct. Semantic Entropy (SE)~\cite{farquhar2024detecting} samples $K$ sequences, maps them to a semantic space $\mathcal{S}$ with a clustering operation $\mathcal{G}: \mathcal{W}^* \to \mathcal{S}$, and aggregates meaning-level probabilities: $\mathcal{H}_{\text{sem}}(v) = - \sum_{s \in \mathcal{S}} P(s|v) \log P(s|v)$, where $\mathcal{W}^*$ is the set of $\mathcal{W}$-strings and $P(s|v) = \sum_{y: \mathcal{G}(y)=s} P(y|v)$ is the aggregated probability.
While SE handles linguistic variance, it still fails to distinguish epistemic from aleatoric uncertainty.

\subsection{Epistemic Uncertainty Isolation via Causal Masking}
Reliable MLLM deployment requires separating aleatoric uncertainty, which reflects data ambiguity, from epistemic uncertainty, which reflects model limitation. Existing metrics often blend the two and therefore skew toward aleatoric noise. As illustrated in Fig.~\ref{fig:combined_perturbation_analysis} middle, Semantic Entropy rises quickly under blurring for aleatoric uncertainty, but is much less sensitive to background mixup for epistemic uncertainty, suggesting that it treats causal and non-causal features alike. Robust reasoning instead should satisfy causal invariance: predictions should stay stable when task-irrelevant context changes, whereas uncertain responses should not.

\begin{figure*}[t]
\vskip 0.2in
\begin{center}
\centerline{
\includegraphics[width=0.99\textwidth]{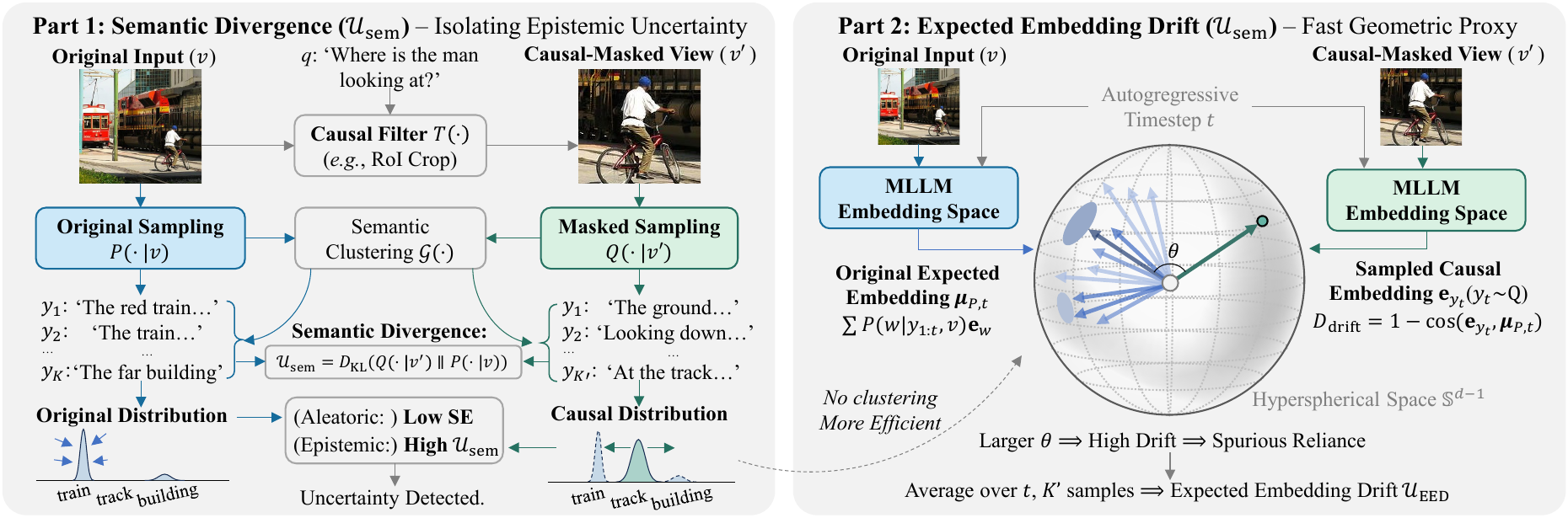}
}
\caption{\textbf{The detailed framework of CIM.} Left: We isolate epistemic uncertainty via Semantic Divergence, which measures the KL divergence between the semantic distribution of the original input and a causally-filtered view $v'$, revealing the model's reliance on query-irrelevant information. Right: Expected Embedding Drift geometrically estimates semantic shift via cosine distance in the hyperspherical embedding space, eliminating the computational overhead of discrete clustering.}
\label{fig:main_figure}
\end{center}
\vskip -0.2in
\end{figure*}

\paragraph{Methodology.}
We propose \textbf{CIM} to isolate the epistemic component through \textbf{causal invariance}.
Given a query $q$, we decompose the visual input $v$ into causal features $v_c$ and non-causal features $v_s$, where $v_c$ is query-relevant, while $v_s$ may carry spurious hints of correlations.
This split is defined with respect to an oracle predictor satisfying $P(s|v_c,v_s)=P(s|v_c)$, rather than assuming that the evaluated MLLM already knows it.
Accordingly, reliable reasoning should remain stable when non-causal variables are removed, and any large shift after filtering $v_s$ exposes reliance on spurious correlations.
We therefore introduce a $v_c$-preserving masking operator $T(\cdot): \mathcal{V} \to \mathcal{V}$, mainly implemented as query-conditioned Region-of-Interest (RoI) cropping. In practice, we detect key objects with GroundingDINO~\cite{liu2024grounding}, filter overly large boxes, and crop the union bounding box so inter-object relations and local context are preserved. This acts as an empirical proxy for the oracle $v_c$ filter: it preserves query-relevant evidence while removing as much $v_s$ as possible, yielding a focused view $v'=T(v)$. Because only a $v_c$-preserving transform is required, the detector choice is not essential, as validated in Appendix~\ref{sec:florence_detector}.

To quantify the shift caused by causal masking, we define \textbf{Semantic Divergence} (\textbf{SD}) as the Kullback-Leibler divergence between the semantic probability distributions of the intervened input $P(s|v', q)$ and the original model $P(s|v, q)$.
In practice, we sample $K'$ sequences from the model, group them into clusters $\{s_i\}_{j=1}^{M}$ via entailment $\mathcal{G}(\cdot)$, and aggregate their likelihoods similarly to SE.
After grouping, the Semantic Divergence is thus calculated as:
\begin{align} \label{eq:divergence}
\mathcal{U}_{\text{sem}}(v, q) &= D_{KL}(P(s|v') \| P(s|v)) \nonumber\\
&= \sum_{s \in \mathcal{S}} P(s|v') \log \frac{P(s|v')}{P(s|v)}.
\end{align}
This KL divergence measures the \textit{information gain} from removing non-causal features.
If the model is robust, or the data is ambiguous while the model is honest, the distribution over meanings remains stable, yielding low $\mathcal{U}_{\text{sem}}$.
In contrast, hallucinations that rely on background context cause a semantic shift and high $\mathcal{U}_{\text{sem}}$. Unlike global perturbations that flatten both distributions, this measure targets epistemic failures. Our $\mathcal{U}_{\text{sem}}$ significantly outperforms traditional SE in recognizing epistemic uncertainty. As illustrated in Fig. \ref{fig:combined_perturbation_analysis} right, we plot the shift of uncertainty when imposing different ambiguity types. Our $\mathcal{U}_{\text{sem}}$ significantly outperforms traditional SE in recognizing epistemic uncertainty.

\subsection{Theoretical Analysis}
We justify why $\mathcal{U}_{\text{sem}}$ isolates epistemic uncertainty by framing the MLLM's semantic output as an Energy-Based Model (EBM)~\cite{carreira2005contrastive, lecun2006tutorial, du2019implicit}. This is compatible with autoregressive decoding: prior work interprets pre-softmax logits as token energies~\cite{minut2026spilled}, connects left-to-right language models and EBMs through a function-space equivalence~\cite{blondel2025autoregressive}, and uses energy functions for constrained text decoding~\cite{qin2022cold}. Because MLLM decoding remains autoregressive after conditioning on images, the same energy view applies. Under standard causal assumptions, the divergence in Eq.~\eqref{eq:divergence} converges to the variance of sensitivity to spurious features.

Standard causal theory~\cite{pearl2000models, pearl2016causal} models the input $v$ as causal variables $v_c$ and non-causal variables $v_s$. An oracle predictor would satisfy $P(s|v_c, v_s) = P(s|v_c)$, but real models often fail to be invariant. We therefore introduce a unified assumption on the masking operation and the energy landscape:

\begin{assumption}[Energy Decomposition]
\label{ass:energy_unified}
The MLLM's semantic output over $\mathcal{S}$ is modeled as a Boltzmann distribution $P_\theta(s|v) = \frac{1}{Z(v)} e^{-E_\theta(s, v)}$, where the total energy decomposes additively into a causal term $E_c(s, v_c)$ and a non-causal term $E_s(s, v_s)$. We assume that the causal energy $E_c$ remains invariant under the causal-preserving $T(\cdot)$, \textit{i.e.}, $E_c(s, v'_c) = E_c(s, v_c)$, while the non-causal energy shifts by a quantity $\Delta_s(s) \triangleq E_s(s, v_s) - E_s(s, v'_s)$.
\end{assumption}

This invariant/variant split follows the broader causal-invariance view that causal mechanisms should remain stable across interventions or environments~\cite{peters2016causal,arjovsky2019invariant,scholkopf2021toward}. Related decomposition work also separates shared invariant factors from domain-specific variants~\cite{bousmalis2016domain}.
Rather than requiring pixel-level separation, Assumption~\ref{ass:energy_unified} only asks that $T(\cdot)$ preserve query-relevant evidence while suppressing non-causal context.
The empirical validations in Appendix~\ref{sec:energy_shift_scale}, Table~\ref{table_vp_operation_ablation}, and Appendix~\ref{sec:detector_failure_sensitivity} show that causal-energy shifts are far below the model's logit margins, while detected RoIs rarely fail and closely match ground truth RoIs.

We first show that the aleatoric component $\phi_c$ cancels in the likelihood ratio. With the empirical evidence above, the cancellation derived is a high-fidelity approximation in practice.

\begin{lemma}[Aleatoric Cancellation] \label{lemma:alertoric}
Given Assumption \ref{ass:energy_unified}, the log-likelihood ratio between the perturbed and original distributions depends solely on the non-causal energy shift $\Delta_s(s)$ and the partition function ratio, \textit{i.e.},
\begin{align}
\log \frac{Q(s)}{P(s)} = \Delta_s(s) + \log \frac{Z(v)}{Z(v')}.
\end{align}
\end{lemma}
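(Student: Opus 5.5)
The proof is a direct computation from Assumption~\ref{ass:energy_unified}. Write $Q(s)\triangleq P_\theta(s|v')$ and $P(s)\triangleq P_\theta(s|v)$, with $v'=T(v)$ decomposed as $(v'_c,v'_s)$. By the Boltzmann form,
\begin{align*}
\log Q(s) &= -E_c(s,v'_c) - E_s(s,v'_s) - \log Z(v'),\\
\log P(s) &= -E_c(s,v_c) - E_s(s,v_s) - \log Z(v).
\end{align*}
Here $Z(v)=\sum_{s\in\mathcal{S}} e^{-E_c(s,v_c)-E_s(s,v_s)}$, and $Z(v')$ is defined analogously.

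Subtracting the two expressions gives
\begin{align*}
\log \frac{Q(s)}{P(s)} = \bigl[E_c(s,v_c)-E_c(s,v'_c)\bigr] + \bigl[E_s(s,v_s)-E_s(s,v'_s)\bigr] + \log\frac{Z(v)}{Z(v')}.
\end{align*}
The invariance clause of Assumption~\ref{ass:energy_unified}, $E_c(s,v'_c)=E_c(s,v_c)$, makes the first bracket vanish for every $s$. This is the ``aleatoric cancellation'': the causal term $\phi_c$ drops out of the likelihood ratio. The second bracket equals $\Delta_s(s)$ by the definition in the assumption. Together these give the stated identity.

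The only point needing care is bookkeeping of signs and of what depends on $s$. The partition functions depend on the inputs but not on $s$, so $\log Z(v)/Z(v')$ is a constant offset. It does not cancel in general, because $Z(v')$ still contains the shifted non-causal energies. I would also note that $\Delta_s$ is defined as original minus masked, which is exactly the orientation produced by $\log Q-\log P$, so no sign flip occurs. There is no real obstacle beyond checking that the additive energy split is applied consistently in both $Z$'s.

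I would close with a short remark. Exact equality holds only under the idealized invariance. In practice a residual causal shift $\epsilon(s)=E_c(s,v_c)-E_c(s,v'_c)$ adds to the right-hand side, and the paper's empirical evidence, that this shift is small relative to logit margins, justifies treating the identity as a high-fidelity approximation.
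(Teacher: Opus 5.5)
Your proposal is correct and follows the paper's proof essentially verbatim: write both log-probabilities from the Boltzmann form, subtract, and note that the causal energy cancels by the invariance clause, leaving $\Delta_s(s)$ plus the partition-function ratio. The only cosmetic difference is that the paper substitutes $E_c(s,v_c)$ directly into $\log Q(s)$, whereas you first write $E_c(s,v'_c)$ and then invoke invariance, which makes the role of the assumption more explicit.
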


The proof is provided in Appendix \ref{sec:proof_lemma_aleatoric}. Lemma~\ref{lemma:alertoric} shows that $E_c(s, v_c)$ cancels in the likelihood ratio, so the metric is invariant to ambiguity in $v_c$. Next, we establish the exact form of the divergence.

\begin{theorem}[Epistemic Jensen Gap] \label{theorem:jensen}
The Semantic Divergence $\mathcal{U}_{\text{sem}} = D_{KL}(Q \| P)$ is equivalent to the Jensen Gap of the exponential function acting on $\Delta_s(s)$ with respect to distribution $Q$.

Furthermore, if we assume the shift magnitude is bounded such that $\sup |\Delta_s(s)| \le \epsilon$ for a small $\epsilon > 0$.
The Semantic Divergence converges to half of the energy shift $\Delta_s(s)$ with a third-order error bound:
\begin{equation}
\label{eq:u_sem_prop_to_var}
    \mathcal{U}_{\text{sem}} = \frac{1}{2} \mathrm{Var}_{s \sim Q}[\Delta_s(s)] + \mathcal{O}(\epsilon^3).
\end{equation}
\end{theorem}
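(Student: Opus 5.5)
The plan is to turn Lemma~\ref{lemma:alertoric} into a closed form for the divergence, and then Taylor-expand the log-partition term. Throughout, write $Q(s)=P(s|v')$, $P(s)=P(s|v)$, $\Delta\equiv\Delta_s(s)$, and take $\mathcal{S}$ finite, as it is in practice.

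\textbf{Step 1: closed form of the divergence.} Take the expectation of the identity in Lemma~\ref{lemma:alertoric} under $Q$. This gives
\[
\mathcal{U}_{\text{sem}}=\mathbb{E}_Q[\Delta]+\log\frac{Z(v)}{Z(v')}.
\]
The partition ratio is not a free quantity, because normalization of $P$ fixes it. Exponentiating the lemma gives $P(s)=Q(s)\,e^{-\Delta(s)}\,Z(v')/Z(v)$. Summing over $s\in\mathcal{S}$ and using $\sum_s P(s)=1$ yields
\[
\log\frac{Z(v)}{Z(v')}=\log\mathbb{E}_Q\!\left[e^{-\Delta}\right].
\]
Substituting back,
\[
\mathcal{U}_{\text{sem}}=\log\mathbb{E}_Q\!\left[e^{-\Delta}\right]-\log e^{\mathbb{E}_Q[-\Delta]}.
\]
This is the (logarithmic) Jensen gap of $\exp$ applied to $X=-\Delta$ under $Q$, namely $\log\big(\mathbb{E}_Q[e^{X}]/e^{\mathbb{E}_Q X}\big)$. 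It is nonnegative by Jensen's inequality, which is consistent with $\mathcal{U}_{\text{sem}}$ being a KL divergence, and it vanishes iff $\Delta$ is $Q$-a.s.\ constant. This proves the first claim.

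\textbf{Step 2: second-order expansion.} Centre the shift: let $\mu=\mathbb{E}_Q[\Delta]$ and $D=\Delta-\mu$. Then $\mathbb{E}_Q D=0$, $\mathbb{E}_Q D^2=\mathrm{Var}_Q[\Delta]$, and $|D|\le 2\epsilon$ by the assumption $\sup|\Delta|\le\epsilon$. The constant $\mu$ cancels exactly between the two terms of the gap, so
\[
\mathcal{U}_{\text{sem}}=\log\mathbb{E}_Q[e^{-D}].
\]
Expand $e^{-D}=1-D+\tfrac12 D^2+R$ with the Lagrange remainder, which gives $|R|\le \tfrac16|D|^3e^{2\epsilon}=\mathcal{O}(\epsilon^3)$ uniformly in $s$. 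Taking expectations gives $\mathbb{E}_Q[e^{-D}]=1+x$ with $x=\tfrac12\mathrm{Var}_Q[\Delta]+\mathcal{O}(\epsilon^3)$, so $x=\mathcal{O}(\epsilon^2)$. Finally, $\log(1+x)=x+\mathcal{O}(x^2)=x+\mathcal{O}(\epsilon^4)$, which yields Eq.~\eqref{eq:u_sem_prop_to_var}. Equivalently, this is the cumulant expansion of $\log\mathbb{E}e^{tX}$ at $t=1$, truncated after the variance term.

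\textbf{Main obstacle.} The algebra is routine; the care is in making the error term honest. First, the remainder bound must be uniform over $s$, which is why I bound $|D|\le2\epsilon$ pointwise rather than appealing to moment conditions. Second, the partition ratio must be eliminated through normalization rather than treated as an independent constant; otherwise the first-order term $\mathbb{E}_Q[\Delta]$ would appear not to cancel. Third, the statement's wording ``half of the energy shift'' should be read as half the $Q$-variance of $\Delta_s$, since the mean shift drops out entirely. That cancellation is precisely what makes the metric insensitive to uniform energy offsets.
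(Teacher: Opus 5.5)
Your proof is correct. It reaches the paper's result by the same overall plan, but it controls the error term in the second part differently.

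In Step 1 you obtain $Z(v)/Z(v')=\mathbb{E}_Q[e^{-\Delta_s}]$ from the normalization of $P$ applied to Lemma~\ref{lemma:alertoric}. The paper gets the same identity by re-expanding $Z(v)$ through the energy decomposition, so the two are equivalent, though yours uses only the lemma. Your reading of the result as the \emph{logarithmic} Jensen gap $\log\mathbb{E}_Q[e^{X}]-\mathbb{E}_Q[X]$ with $X=-\Delta_s$ is also the precise one; the paper calls it ``the Jensen gap'' without qualification.

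Step 2 is where the routes differ. The paper applies Taylor's theorem with Lagrange remainder to the cumulant generating function $K_{-\Delta}(t)=\log\mathbb{E}_Q[e^{-t\Delta_s}]$ on $[0,1]$. It reads off $K(0)=0$, $K'(0)=-\mathbb{E}_Q[\Delta_s]$ and $K''(0)=\mathrm{Var}_Q[\Delta_s]$, lets the linear terms cancel after the expansion, and simply asserts $|K'''_{-\Delta}(\xi)|\le C\epsilon^3$. You instead centre first, so the mean drops out exactly before any approximation. You then expand $e^{-D}$ pointwise and $\log(1+x)$ separately.

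Your route gives a uniform, explicit remainder from elementary bounds alone. Making the paper's step rigorous needs an extra observation: $K'''_{-\Delta}(\xi)$ is the third central moment of $-\Delta_s$ under the tilted law $Q_\xi\propto Q\,e^{-\xi\Delta_s}$, and is therefore bounded by $(2\epsilon)^3$. The paper's route, in exchange, shows the result directly as the cumulant series truncated after the variance, with higher cumulants as the neglected terms. That is the equivalence you mention at the end of your Step 2.
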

We provide the detailed proof in Appendix \ref{sec:proof_theorem_jensen}. \uline{This result shows that $\mathcal{U}_{\text{sem}}$ quantifies epistemic instability caused by irrelevant features under the stated approximation.} When the $v_c$-focused operation $T(\cdot)$ removes most of $v_s$, we have $\Delta_s(s)=E_s(s,v_s)$, so $\mathcal{U}_{\text{sem}}$ measures variance on query-irrelevant signals. Appendix~\ref{sec:energy_shift_scale} reports empirical $\Delta_s$ magnitudes, showing its bounded level in practice.

\subsection{Fast Quantification via Expected Embedding Drift}
Despite its theoretical validity, Semantic Divergence $\mathcal{U}_{\text{sem}}$ is expensive to compute. Equation~\eqref{eq:divergence} requires estimating $P(s|v) = \sum_{y: \mathcal{G}(y)=s} P(y|v)$, which in turn needs samples from $P$ and $Q$, followed by expensive LLM clustering $\mathcal{G}(\cdot)$.

However, we observe that semantic meaning is already embedded in the model's high-dimensional geometry, where tokens with similar meanings lie close together. We exploit this and propose \textbf{Expected Embedding Drift} (EED), a clustering-free proxy that bypasses discrete semantic grouping.
	   
Specifically, it measures the geometric divergence between the masked-input distribution $Q$ and the original semantic distribution $P$ by mapping token probabilities to single vectors on a hypersphere.

\paragraph{Expected Semantic Vector.}
For a response $y\sim Q$ at time step $t$ conditioned on $y_{<t}$ and $v$, rather than treating the output solely as a discrete categorical distribution, we view the token distribution as a distribution over \textit{semantic vectors} and compute the \textbf{expected embedding vector}, $\boldsymbol{\mu}_{P,t}$ as
\begin{equation}
\boldsymbol{\mu}_{P,t} = \mathbb{E}_{w \sim P(\cdot | y_{<t},v)} [\mathbf{e}_w] = \sum_{w \in \mathcal{W}} P(w | y_{<t}) \cdot \mathbf{e}_w,
\end{equation}
where $\mathbf{e}_w \in \mathbb{R}^d$ is the $L_2$-normalized embedding vector for token $w$ in $\mathcal{W}$. This expected embedding is the probability-weighted semantic center in hyperspherical space, filtering low-probability noise and pointing to the aggregate intent.

\paragraph{Drift Calculation.}
To measure epistemic inconsistency, for each generated response sequence $y = (y_1, \dots, y_T)\sim Q$, we measure the cosine drift of $y_t$ from the source semantic expectation $\boldsymbol{\mu}_{P,t}$ at each autoregressive timestep $t$:
\begin{equation} \label{eq:drift}
D_{\text{drift}}(y_t) = 1 - \cos(\mathbf{e}_{y_t}, \boldsymbol{\mu}_{P,t}) = 1 - \frac{\mathbf{e}_{y_t} \cdot \boldsymbol{\mu}_{P,t}}{\|\mathbf{e}_{y_t}\|\|\boldsymbol{\mu}_{P,t}\|},
\end{equation}
where $\mathbf{e}_{y_t}$ is the embedding vector of token $y_t$. Lower drift means the perturbed token still aligns with the original average belief, while higher drift indicates a stronger rotation after irrelevant features are removed.
We average this drift over $K'$ stochastic sampling passes and their tokens to obtain the sequence-level EED:
\begin{equation}
\label{eq:u_eed}
\mathcal{U}_{\text{EED}}(v, q) = \frac{1}{K'} \sum_{k=1}^{K'} \sum_{t=1}^{T_k}
D_{\text{drift}}(y_t^{(K+k)}),
\end{equation}

This aggregates the step-wise deviations into a scalar score, where $\mathbf{e}_{y_t^{(k)}}$ is the embedding of the token generated by the perturbed model in the $k$-th sample.
Notably, EED scores require far fewer samples than SD to stabilize, thanks to its non-discrete semantic aggregation. It also avoids a semantic clustering model and $\mathcal{O}((K+K')^2)$ entailment judgments, reducing the overhead to just $\mathcal{O}(K')$. We present the framework of the proposed uncertainty metrics in Fig.~\ref{fig:main_figure}

\paragraph{Theoretical Justification}
To justify EED, we show that under a geometric distribution assumption, minimizing $\mathcal{U}_{\text{EED}}$ is equivalent to minimizing the KL divergence between the semantic distributions.
\begin{assumption}[Spherical Semantic Distribution]
We model the continuous semantic distribution of the MLLM output as a von Mises-Fisher (vMF) distribution on the unit hypersphere $\mathbb{S}^{d-1}$~\cite{mardia2009directional,kumarmises}. The probability density of a semantic vector $\mathbf{z}$ given mean direction $\boldsymbol{\mu} \in \mathbb{S}^{d-1}$ and concentration $\kappa > 0$ is:
\begin{equation}
    p(\mathbf{z} | \boldsymbol{\mu}, \kappa) = C_d(\kappa) \exp(\kappa \boldsymbol{\mu}^\top \mathbf{z}),
\end{equation}
where $C_d(\kappa) = \kappa^{d/2-1} / ((2\pi)^{d/2} I_{d/2-1}(\kappa))$ is the normalization constant.
\end{assumption}

This connects the discrete token space to the continuous geometry, where $\boldsymbol{\mu}$ corresponds to the normalized expected embedding $\boldsymbol{\mu}_{P,t}$.

\begin{lemma}[KL Divergence of vMF Distributions]\label{lem:kl_vmf}
Let $Q \sim \text{vMF}(\boldsymbol{\mu}_Q, \kappa)$ and $P \sim \text{vMF}(\boldsymbol{\mu}_P, \kappa)$ be two von Mises-Fisher distributions with identical concentration $\kappa$. The Kullback-Leibler divergence between them is strictly proportional to the cosine distance between their mean directions:
\begin{equation}\label{eq:E_u_eed_is_kl}
    D_{KL}(Q \| P) = \kappa A_d(\kappa) (1 - \boldsymbol{\mu}_Q^\top \boldsymbol{\mu}_P),
\end{equation}
where $A_d(\kappa) = \|\mathbb{E}_{\mathbf{z} \sim Q}[\mathbf{z}]\| = \frac{I_{d/2}(\kappa)}{I_{d/2-1}(\kappa)}$ is the ratio of Bessel functions.
\end{lemma}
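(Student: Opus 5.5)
The plan is a direct computation from the vMF density in the Spherical Semantic Distribution assumption, reducing the divergence to a first moment of $Q$. Write the log-density ratio for $\mathbf{z}\in\mathbb{S}^{d-1}$:
\begin{equation*}
\log\frac{q(\mathbf{z})}{p(\mathbf{z})} = \log\frac{C_d(\kappa)}{C_d(\kappa)} + \kappa(\boldsymbol{\mu}_Q - \boldsymbol{\mu}_P)^\top\mathbf{z} = \kappa(\boldsymbol{\mu}_Q - \boldsymbol{\mu}_P)^\top\mathbf{z}.
\end{equation*}
Because the concentrations are equal, the normalizing constants cancel exactly. Taking the expectation under $Q$ and using linearity gives
\begin{equation*}
D_{KL}(Q\|P) = \kappa(\boldsymbol{\mu}_Q - \boldsymbol{\mu}_P)^\top\mathbb{E}_{\mathbf{z}\sim Q}[\mathbf{z}].
\end{equation*}
Everything therefore reduces to computing the mean vector of a vMF distribution.

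The key step is to show $\mathbb{E}_{\mathbf{z}\sim Q}[\mathbf{z}] = A_d(\kappa)\boldsymbol{\mu}_Q$. I will argue this in two parts.

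\emph{Direction.} The density is invariant under every rotation fixing $\boldsymbol{\mu}_Q$. Hence the mean is invariant under those rotations and must be parallel to $\boldsymbol{\mu}_Q$.

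\emph{Magnitude.} I will differentiate the normalization identity $\int_{\mathbb{S}^{d-1}} \exp(\kappa\boldsymbol{\mu}^\top\mathbf{z})\,d\mathbf{z} = 1/C_d(\kappa)$ with respect to $\kappa$. This gives
\begin{equation*}
\mathbb{E}_Q[\boldsymbol{\mu}_Q^\top\mathbf{z}] = -\frac{d}{d\kappa}\log C_d(\kappa).
\end{equation*}
Substituting $C_d(\kappa)=\kappa^{d/2-1}/((2\pi)^{d/2}I_{d/2-1}(\kappa))$, this derivative equals $I'_{\nu}(\kappa)/I_\nu(\kappa) - \nu/\kappa$ with $\nu=d/2-1$. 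The Bessel recurrence $I'_\nu(\kappa) = I_{\nu+1}(\kappa) + \frac{\nu}{\kappa}I_\nu(\kappa)$ then reduces this to $I_{d/2}(\kappa)/I_{d/2-1}(\kappa) = A_d(\kappa)$. Since the mean is parallel to $\boldsymbol{\mu}_Q$, this also confirms $A_d(\kappa) = \|\mathbb{E}_Q[\mathbf{z}]\|$, matching the statement.

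Finally, I substitute the mean into the expression for the divergence and use $\|\boldsymbol{\mu}_Q\|=1$:
\begin{equation*}
\kappa A_d(\kappa)(\boldsymbol{\mu}_Q-\boldsymbol{\mu}_P)^\top\boldsymbol{\mu}_Q = \kappa A_d(\kappa)(1-\boldsymbol{\mu}_Q^\top\boldsymbol{\mu}_P).
\end{equation*}
This is Eq.~\eqref{eq:E_u_eed_is_kl}. Since $\kappa A_d(\kappa)>0$ for $\kappa>0$, the divergence is strictly proportional to the cosine distance between the mean directions.

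I expect the main obstacle to be the mean computation: getting the sign and the index shift right in the Bessel identity, and justifying the exchange of differentiation and integration. The exchange is harmless because the sphere is compact and the integrand is smooth in $\kappa$. If the recurrence proves awkward, the fallback is to reduce to the one-dimensional marginal of $t=\boldsymbol{\mu}_Q^\top\mathbf{z}$, whose density is proportional to $e^{\kappa t}(1-t^2)^{(d-3)/2}$, and use the integral representation of $I_\nu$.
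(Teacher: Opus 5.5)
Your proposal is correct and follows the paper's proof step for step: cancel the equal normalizers, reduce $D_{KL}(Q\|P)$ to $\kappa(\boldsymbol{\mu}_Q-\boldsymbol{\mu}_P)^\top\mathbb{E}_{\mathbf{z}\sim Q}[\mathbf{z}]$, substitute the vMF mean, and use $\|\boldsymbol{\mu}_Q\|=1$. The only difference is that you derive $\mathbb{E}_{\mathbf{z}\sim Q}[\mathbf{z}]=A_d(\kappa)\boldsymbol{\mu}_Q$ yourself (via symmetry, differentiating the normalizer, and the Bessel recurrence, all of which check out), whereas the paper simply cites it as a known vMF property; one small point is that for $d=2$ the symmetry step needs the reflection fixing $\boldsymbol{\mu}_Q$, since no nontrivial rotation fixes it, but this is cosmetic.
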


The derivation is detailed in Appendix \ref{sec:proof_lemma_vmf}.

\begin{theorem}[Geometric Proxy Equivalence]\label{thm_eed}
The Expected Embedding Drift $\mathcal{U}_{\text{EED}}$ is an unbiased estimator of the KL divergence between the perturbed semantic distribution $Q$ and the original belief $P$, up to a scaling factor and additive constant.
\end{theorem}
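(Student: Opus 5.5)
The plan is to reduce the statement to Lemma~\ref{lem:kl_vmf} by taking expectations of the per-token drift in Eq.~\eqref{eq:drift} under the sampling distribution $Q$. Fix a timestep $t$ and a prefix $y_{<t}$. Under the spherical semantic assumption, the perturbed-view semantic vector $\mathbf{e}_{y_t}$ (with $y_t\sim Q(\cdot\mid y_{<t},v')$) is modeled as a draw $\mathbf{z}\sim\text{vMF}(\boldsymbol{\mu}_Q,\kappa)$. The reference direction is $\bar{\boldsymbol{\mu}}_P \triangleq \boldsymbol{\mu}_{P,t}/\|\boldsymbol{\mu}_{P,t}\|$, and the original belief is $P\sim\text{vMF}(\bar{\boldsymbol{\mu}}_P,\kappa)$ with the same concentration. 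Since the embeddings are $L_2$-normalized, $\|\mathbf{e}_{y_t}\|=1$. The drift is therefore the affine function $D_{\text{drift}}(y_t)=1-\mathbf{z}^\top\bar{\boldsymbol{\mu}}_P$ of $\mathbf{z}$.

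The key computation is the expectation of this affine function. By linearity,
\begin{equation*}
\mathbb{E}_{\mathbf{z}\sim Q}\bigl[D_{\text{drift}}\bigr]=1-\mathbb{E}_Q[\mathbf{z}]^\top\bar{\boldsymbol{\mu}}_P=1-A_d(\kappa)\,\boldsymbol{\mu}_Q^\top\bar{\boldsymbol{\mu}}_P .
\end{equation*}
Here I use the standard vMF mean identity $\mathbb{E}_Q[\mathbf{z}]=A_d(\kappa)\boldsymbol{\mu}_Q$, which is already implicit in Lemma~\ref{lem:kl_vmf}. Lemma~\ref{lem:kl_vmf} gives $\boldsymbol{\mu}_Q^\top\bar{\boldsymbol{\mu}}_P = 1-D_{KL}(Q\|P)/(\kappa A_d(\kappa))$. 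Substituting this yields
\begin{equation*}
\mathbb{E}_Q\bigl[D_{\text{drift}}\bigr]=\bigl(1-A_d(\kappa)\bigr)+\frac{1}{\kappa}\,D_{KL}(Q\|P).
\end{equation*}
So the expected per-token drift equals the KL divergence scaled by $1/\kappa$, plus the additive constant $1-A_d(\kappa)$. The constant depends only on $\kappa$ and $d$, not on the directions.

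Next I lift this to the sequence level. Each summand in Eq.~\eqref{eq:u_eed} is a sample of $D_{\text{drift}}$ under $Q$, and the $K'$ passes are i.i.d. Hence $\mathbb{E}[\mathcal{U}_{\text{EED}}]$ is a Monte Carlo average with expectation $\mathbb{E}_{y\sim Q}\sum_t \mathbb{E}[D_{\text{drift}}(y_t)\mid y_{<t}]$, which I compute by the tower property over prefixes. Applying the per-step identity inside gives
\begin{equation*}
\mathbb{E}[\mathcal{U}_{\text{EED}}]=\frac{1}{\kappa}\,\mathbb{E}_Q\Bigl[\sum_t D_{KL}\bigl(Q_t\|P_t\bigr)\Bigr]+\bigl(1-A_d(\kappa)\bigr)\,\mathbb{E}_Q[T].
\end{equation*}
By the chain rule for KL, the first sum is the sequence-level $D_{KL}(Q\|P)$, which is the claimed unbiasedness up to the scale $1/\kappa$ and an additive term.

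The main obstacle is that the additive term $(1-A_d(\kappa))\,\mathbb{E}_Q[T]$ is only a constant if the response length is fixed or independent of the perturbation. I would first handle this by conditioning on $T$, or by assuming length-normalized drift, so that the offset is a genuine constant. A second subtlety is that $\kappa$ must be shared across steps and between $P$ and $Q$, which I would state explicitly as part of the vMF modeling. The identification of the discrete token draw $\mathbf{e}_{y_t}$ with a vMF sample is exactly what the assumption provides, so I would not try to justify it further.
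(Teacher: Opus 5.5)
Your per-token computation is the paper's proof. The paper also takes the expectation of the drift under $Q$, uses $\mathbb{E}_{Q}[\mathbf{e}_{y_t}]=A_d(\kappa)\boldsymbol{\mu}_Q$, and substitutes Lemma~\ref{lem:kl_vmf}. It obtains $D_{KL}(Q\|P)=\kappa\,\mathbb{E}[\mathcal{U}_{\text{EED}}]-\kappa\bigl(1-A_d(\kappa)\bigr)$, which is your identity multiplied by $\kappa$. The paper stops there, implicitly treating $\mathcal{U}_{\text{EED}}$ as a single-token drift and never handling the sum over $t$ or the average over the $K'$ passes in Eq.~\eqref{eq:u_eed}. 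Your tower-property lift is sound, since the event $\{T\ge t\}$ is determined by $y_{<t}$. It also catches something the paper's argument misses: for the summed score the offset is $(1-A_d(\kappa))\,\mathbb{E}_Q[T]$, which varies with the input through the expected length. So the ``additive constant'' in the statement is genuinely constant only per token or under a fixed horizon.

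Two cautions on that extension. First, of your proposed remedies, only a fixed (token-independent) horizon works. Conditioning on $T$ breaks the per-step identity, because $T$ depends on $y_t$ and later tokens, so $\mathbb{E}[D_{\text{drift}}(y_t)\mid y_{<t},T]$ is no longer the vMF mean. Dividing by $T$ does make the offset exactly $1-A_d(\kappa)$. However, $1/T$ is not determined by $y_{<t}$, so you can no longer apply the per-step identity to the remaining terms, and their expectation need not be a scaled sum of per-step KLs. Second, the chain rule for KL sums the discrete conditional divergences $D_{KL}\bigl(Q(\cdot\mid y_{<t},v')\,\|\,P(\cdot\mid y_{<t},v)\bigr)$, whereas your summands are KLs between vMF surrogates. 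Calling their expected sum ``the sequence-level $D_{KL}(Q\|P)$'' is therefore a modeling identification, not a consequence of the chain rule: it amounts to imposing the vMF assumption conditionally at every step with one shared $\kappa$. Either way, the result is a token-trajectory divergence rather than the cluster-level $\mathcal{U}_{\text{sem}}$ of Eq.~\eqref{eq:divergence}. The paper's per-token proof leaves that gap equally open.
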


The full proof is available in Appendix \ref{sec:proof_theorem_eed}, which connects the proxy to our core objective. The point-wise drift in Eq.~\eqref{eq:drift} is aggregated in Eq.~\eqref{eq:u_eed} to estimate the expected cosine distance. As established in Eq.~\eqref{eq:E_u_eed_is_kl} and Thm.~\ref{thm_eed}, this distance effectively estimates the Semantic Divergence in Eq.~\eqref{eq:divergence}, giving an efficient proxy for epistemic uncertainty.

\section{Experiment}
\label{sec:exp}

We adopt InternVL2-8B~\cite{chen2024far} and Qwen2.5-VL-7B-Instruct~\cite{bai2025qwen25vltechnicalreport} as our evaluated models. We test the performance of $\mathcal U_{\text{sem}}$ and $\mathcal U_{\text{EED}}$ on various benchmarks, including common visual question answering (VQAv2~\cite{goyal2017making}, OKVQA~\cite{marino2019ok}, AdVQA~\cite{li2021adversarial}) and hallucination-targeted benchmarks (POPE~\cite{li2023evaluating}, MME~\cite{fu2025mme}).
For simplicity, we adopt the lightweight GroundingDINO detector~\cite{liu2024grounding} as our RoI generator. In practice, specialized or query-conditioned detectors can also instantiate $T(\cdot)$, as verified in Appendix~\ref{sec:florence_detector}.
We fix the test subset and set an equal maximum sampling time $K_{\text{total}}=K$ (baseline methods) or $K_{\text{total}}=K+K'$ (our method) to $32$ for fair comparison. In practice, as $\{y_k\}_{k=1}^{K}\sim P$ can be utilized to yield our values while producing a half-sampled SE (compared to $K_{\text{total}}$ samples), we calculate a mixed uncertainty metric by interpolating our values with this SE to balance estimation of distinct uncertainty types. 
Since hallucination detection benefits from total uncertainty, this interpolation captures both aleatoric and epistemic uncertainty.
Appendix~\ref{sec:exp_setup_details} provides the detailed experimental setup.

\begin{table*}[!t]
\begin{center}
\caption{\textbf{Uncertainty estimation performance} (\%) on visual question answering benchmarks. AUC is short for AUROC in the table.}
\label{table_vqa_main}
\vskip 0.1in
\begin{center}
\begin{small}
\begin{sc}
\resizebox{0.99\textwidth}{!}{
\begin{tabular}{lcccccccccccc}
\toprule
Benchmark & \multicolumn{4}{c}{VQAv2} & \multicolumn{4}{c}{OKVQA} & \multicolumn{4}{c}{AdVQA} \\
MLLM & \multicolumn{2}{c}{InternVL2-8B} & \multicolumn{2}{c}{Qwen2.5-VL-7B} & \multicolumn{2}{c}{InternVL2-8B} & \multicolumn{2}{c}{Qwen2.5-VL-7B} & \multicolumn{2}{c}{InternVL2-8B} & \multicolumn{2}{c}{Qwen2.5-VL-7B} \\
Metric & AUC & AURAC & AUC & AURAC & AUC & AURAC & AUC & AURAC & AUC & AURAC & AUC & AURAC \\
\midrule
Perplexity & 70.3 & 80.7 & 77.6 & \uline{81.4} & 65.4 & 76.8 & 75.1 & 82.3 & 65.7 & 60.3 & 66.4 & 61.2 \\
Self Evaluation~\cite{xiong2023can} & 60.9 & 75.1 & 62.4 & 71.9 & 66.4 & 77.0 & 63.1 & 75.5 & 60.4 & 57.1 & 55.3 & 49.7 \\
Verbalized Confidence~\cite{tian2023just} & 54.3 & 68.3 & 57.2 & 69.2 & 55.3 & 67.8 & 58.2 & 72.1 & 50.4 & 48.6 & 57.1 & 51.9 \\
\midrule
Semantic Entropy~\cite{farquhar2024detecting} & 78.7 & 82.5 & 76.5 & 79.5 & 80.6 & \uline{84.5} & 78.6 & 83.8 & 75.5 & \uline{66.6} & 73.6 & 63.1 \\
EigenScore~\cite{cheninside} & 72.6 & 80.7 & 75.8 & 79.9 & 72.2 & 78.5 & 76.8 & 83.0 & 67.1 & 60.2 & 64.6 & 59.1 \\
UMPIRE~\cite{lau2025uncertainty} & 74.3 & 81.7 & 72.9 & 78.4 & 72.6 & 79.0 & 76.4 & 83.0 & 68.2 & 61.9 & 65.5 & 59.6 \\
\midrule
Uncertainty-o~\cite{zhang2025uncertainty} & 72.1 & 80.1 & 64.9 & 74.2 & 67.7 & 77.8 & 61.8 & 74.2 & 60.1 & 56.1 & 57.4 & 53.7 \\
Ours (SD) & \textbf{81.3} & \textbf{85.2} & \textbf{78.4} & \textbf{81.5} & \textbf{81.8} & \textbf{85.7} & \textbf{79.6} & \textbf{84.2} & \textbf{76.8} & \textbf{66.9} & \uline{74.5} & \textbf{64.8} \\
Ours   (EED) & \uline{80.3} & \uline{84.5} & \uline{77.9} & 81.1 & \uline{80.9} & \uline{84.5} & \uline{79.3} & \uline{84.0} & \uline{75.8} & 66.5 & \textbf{74.6} & \uline{64.6} \\
\bottomrule
\end{tabular}
 }
\end{sc}
\end{small}
\end{center}
\vskip -0.1in
\end{center}
\end{table*}

\begin{table*}[!t]
\begin{center}
\caption{\textbf{Uncertainty estimation performance} (\%) on all 3 sub-tasks of the POPE benchmark.}
\label{table_pope}
\vskip 0.1in
\begin{center}
\begin{small}
\begin{sc}
\resizebox{0.99\textwidth}{!}{
\begin{tabular}{lcccccccccccc}
\toprule
Benchmark & \multicolumn{4}{c}{POPE random} & \multicolumn{4}{c}{POPE popular} & \multicolumn{4}{c}{POPE adversarial} \\
MLLM & \multicolumn{2}{c}{InternVL} & \multicolumn{2}{c}{QwenVL} & \multicolumn{2}{c}{InternVL} & \multicolumn{2}{c}{QwenVL} & \multicolumn{2}{c}{InternVL} & \multicolumn{2}{c}{QwenVL} \\
Metric & AUC & AURAC & AUC & AURAC & AUC & AURAC & AUC & AURAC & AUC & AURAC & AUC & AURAC \\ \midrule
Perplexity & 68.7 & 91.2 & 68.5 & 91.5 & 68.1 & 90.4 & 67.3 & 90.2 & 68.9 & 90.8 & 65.2 & 89.2 \\
Self Evaluation~\cite{xiong2023can} & 52.0 & 84.6 & 61.3 & 91.4 & 53.6 & 84.7 & 51.5 & 88.1 & 53.5 & 85.2 & 57.4 & 89.5 \\
Verbalized Confidence~\cite{tian2023just} & 50.9 & 84.2 & 60.4 & 88.2 & 51.8 & 82.9 & 60.5 & 87.2 & 50.6 & 82.8 & 60.6 & 87.5 \\ \midrule
Semantic Entropy~\cite{farquhar2024detecting} & 70.6 & 89.6 & 62.9 & 88.7 & 68.7 & 88.5 & 64.8 & 88.0 & 72.3 & 90.2 & 63.0 & 86.3 \\
EigenScore~\cite{cheninside} & 65.5 & 89.4 & 56.0 & 87.2 & 64.4 & 87.8 & 56.6 & 86.5 & 63.8 & 88.5 & 54.6 & 85.4 \\
UMPIRE~\cite{lau2025uncertainty} & 68.7 & 91.2 & \uline{80.5} & \uline{94.4} & 68.7 & \uline{90.5} & \uline{79.1} & \uline{93.4} & 68.2 & 90.5 & \uline{77.0} & \uline{92.7} \\ \midrule
Uncertainty-o~\cite{zhang2025uncertainty} & 64.6 & 88.8 & 51.5 & 85.4 & 66.4 & 88.6 & 55.1 & 85.9 & 62.8 & 88.2 & 56.6 & 86.1 \\
Ours (SD) & \textbf{75.9} & \textbf{92.4} & 66.2 & 89.3 & \textbf{74.8} & \textbf{91.3} & 66.1 & 88.7 & \textbf{74.9} & \textbf{91.7} & 64.5 & 88.1 \\
Ours (EED) & \uline{73.1} & \uline{91.6} & \textbf{83.2} & \textbf{95.1} & \uline{71.0} & \uline{90.5} & \textbf{81.5} & \textbf{94.3} & \uline{73.0} & \uline{91.3} & \textbf{78.6} & \textbf{93.6} \\ \bottomrule
\end{tabular}
}
\end{sc}
\end{small}
\end{center}
\vskip -0.1in
\end{center}
\end{table*}

\subsection{Results on Hallucination Detection}
\label{sec:main_results}

\paragraph{General VQA Benchmarks.}
Table \ref{table_vqa_main} presents the results on VQAv2, OKVQA, and AdVQA. CIM consistently outperforms existing baselines all evaluated VQA benchmarks.
Notably, on the AdVQA benchmark, which is specifically designed to elicit model errors, our method achieves an AUROC of \textbf{76.8\%} on InternVL, surpassing Semantic Entropy (75.5\%) and EigenScore (67.1\%). This suggests that our method's specific reliance on \textit{causal invariance} by checking if the answer holds after removing background context is more robust to adversarial visual perturbations than methods that solely attend to the output distribution.
Furthermore, our computationally efficient proxy, Ours (EED), performs competitively with the $\mathcal{U}_{\text{sem}}$ method, often within 1\% of the explicit Semantic Divergence scores, validating the theoretical connection in Lemma \ref{lem:kl_vmf}.

We also provide case study in Appendix~\ref{sec:case_study}, to further verify that SD more successfully capture epistemic uncertainty.

\paragraph{Hallucination-targeted Benchmarks.}
Table \ref{table_pope} details performance on the POPE benchmark. This benchmark is crucial as it directly evaluates on the topic of object hallucinations.
On the \textbf{InternVL} model, our explicit method dominates the performance across all splits, achieving $\sim$\textbf{75\% AUROC}, significantly outperforming Semantic Entropy ($\sim$68-72\%).
Interestingly, on \textbf{QwenVL}, our EED proxy outperforms the explicit method (e.g., 83.2\% vs 66.2\% on Random). We find that QwenVL's decoding distribution is much sharper than InternVL's on POPE (Appendix~\ref{sec:pope_entropy}), which makes discrete semantic clustering less stable for the explicit method. In contrast, EED operates in the continuous embedding space and is less affected by hard cluster assignments.
A significant gap is observed between our CIM and other baselines on the challenging \textbf{Adversarial} split, confirming CIM's identification of whether the context priors are relied on rather than visual evidence. We further evaluate on the comprehensive MME benchmark in Appendix~\ref{sec:mme_results}, where Fig.~\ref{fig:MME_performance} shows that CIM remains the best-performing method.
This demonstrates CIM's generalization beyond simple object existence to complex reasoning in MME, and again verifies its consistency in uncertainty estimation.

\subsection{Discussions}
\paragraph{Causal Masking.}
Our theoretical framework proposes that the visual input operator $T(\cdot)$ must focus on isolating causal features $x_c$ to separate epistemic from aleatoric uncertainty. To validate such a design choice of $x_c$-focused RoI Cropping, we compare it against: 1) \textbf{Random Crop}, which crops an arbitrary region, 2) \textbf{Reversed Crop}, which masks the detected RoIs and keeps only the background, and 3) \textbf{Ground-Truth RoIs}, which are from the box-annotated GQA benchmark.
The ablation results are presented in Table~\ref{table_vp_operation_ablation}. We also compare with the commonly used gaussian noise perturbation in Appendix~\ref{sec:gaussian_noise_ablation}.
From the results, \textbf{Random Crop} yield inferior performance compared to our targeted approach. This aligns with our hypothesis that random perturbations degrade the aleatoric and epistemic signals indiscriminately.
Crucially, \textbf{Reversed Crop} performs significantly worse than Random Crop in most tested cases. This is because feeding only the background forces the model to hallucinate or guess, but does not provide an anchor point for the correct reasoning.
In contrast, our method of RoI Crop achieves significantly larger AUROC across all settings (e.g., \textbf{81.3\%} on VQAv2 for InternVL). By preserving necessary evidence while maximally removing spurious context, our measurement effectively grounds the MLLM's confidence in \textit{causal} signals, providing the cleanest isolation of epistemic uncertainty.

While the SE component is included to retain aleatoric sensitivity for hallucination detection, it can dilute the random-vs-RoI gap on epistemic UQ.
To isolate the epistemic component, the $\text{\scshape AUROC}^{\text{\scshape epi}}$ metric reports AUROCs in a pure SD setup without the auxiliary SE score. Under this setting, the separation becomes much larger: Our RoI Crop improves over Random Crop by 6.5/9.2 AUROC on InternVL VQAv2/AdVQA or by 11.6/14.0 AUROC on QwenVL VQAv2/AdVQA, and is even more superior to the Reversed Crop.
We additionally include GQA~\cite{hudson2019gqa} because it provides ground-truth object annotations, enabling a controlled comparison with GT RoIs. On GQA, our operation is comparable to \textbf{GT RoIs Crop} and both clearly outperform the others, validating that CIM relies on preserving query-relevant causal evidence rather than arbitrary perturbation, and exact crops are less critical than retaining causal evidence.

\begin{table*}[!t]
\centering
\caption{\textbf{Ablation study of the $x_c$-focused image transform.} AUROC and AURAC use the final mixed score, while $\text{\scshape AUROC}^{\text{\scshape epi}}$ uses the SE-free pure SD score. GT RoI is available only on GQA; -- denotes not applicable. IVL / QVL denotes InternVL / QwenVL, respectively.}
\label{table_vp_operation_ablation}
\vskip 0.1in
\begin{small}
\begin{sc}
\setlength{\tabcolsep}{2.0pt}
\resizebox{0.99\textwidth}{!}{
\begin{tabular}{lcccccccccccccccccc}
\toprule
Metric & \multicolumn{6}{c}{$\text{AUROC}$} & \multicolumn{6}{c}{$\text{AUROC}^{\text{epi}}$} & \multicolumn{6}{c}{$\text{AURAC}$} \\
\cmidrule(lr){2-7}\cmidrule(lr){8-13}\cmidrule(lr){14-19}
Benchmark & \multicolumn{2}{c}{VQAv2} & \multicolumn{2}{c}{AdVQA} & \multicolumn{2}{c}{GQA} & \multicolumn{2}{c}{VQAv2} & \multicolumn{2}{c}{AdVQA} & \multicolumn{2}{c}{GQA} & \multicolumn{2}{c}{VQAv2} & \multicolumn{2}{c}{AdVQA} & \multicolumn{2}{c}{GQA} \\
MLLM & IVL & QVL & IVL & QVL & IVL & QVL & IVL & QVL & IVL & QVL & IVL & QVL & IVL & QVL & IVL & QVL & IVL & QVL \\
\midrule
Random Crop     & \uline{80.2}  & \uline{77.9}  & \uline{75.7}  & \uline{71.9}  & 77.9          & 68.8          & \uline{70.3}  & \uline{62.1}  & \uline{61.4}  & \uline{54.3}  & 61.0          & 57.5          & \uline{83.6}  & \uline{81.6}  & \uline{66.7}  & \uline{61.5}  & 81.5          & 61.3          \\
Reversed Crop   & 77.6          & 76.1          & 74.8          & \uline{71.9}  & 76.8          & 69.5          & 53.7          & 56.5          & 54.2          & 50.7          & 58.7          & 54.5          & 81.5          & 79.2          & 65.0          & 61.3          & 80.1          & 61.5          \\
GT RoI Crop     & --            & --            & --            & --            & \textbf{80.3} & \textbf{71.4} & --            & --            & --            & --            & \uline{72.1}  & \textbf{66.3} & --            & --            & --            & --            & \uline{82.4}  & \textbf{63.9} \\
Ours (RoI Crop) & \textbf{81.3} & \textbf{78.5} & \textbf{76.8} & \textbf{74.5} & \uline{79.9}  & \uline{70.2}  & \textbf{76.8} & \textbf{73.7} & \textbf{70.6} & \textbf{68.3} & \textbf{73.3} & \uline{64.6}  & \textbf{85.2} & \textbf{81.9} & \textbf{66.9} & \textbf{64.8} & \textbf{82.5} & \uline{63.1}  \\
\bottomrule
\end{tabular}
}
\end{sc}
\end{small}
\vskip -0.1in
\end{table*}

\begin{figure}[t]
\centering
\begin{minipage}[t]{0.49\textwidth}
\vspace{0pt}
\centering
\includegraphics[width=0.49\linewidth]{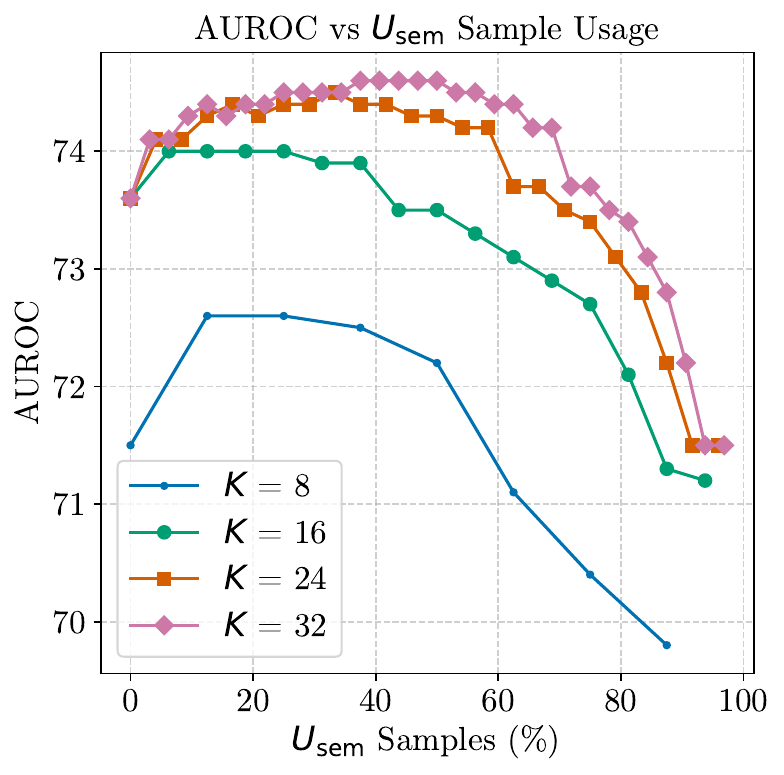}
\includegraphics[width=0.49\linewidth]{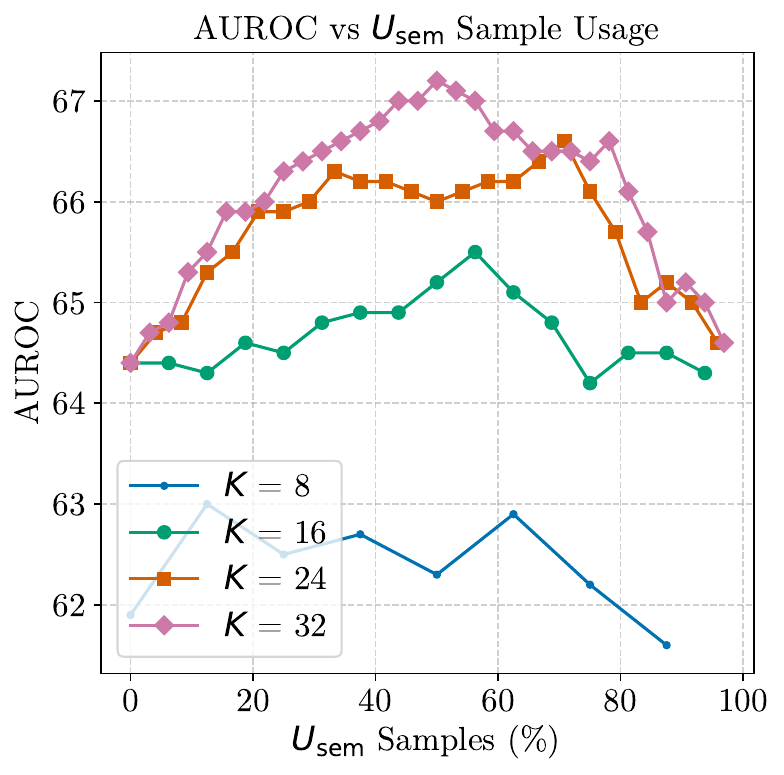}
\caption{\textbf{Hallucination detection performance with fixed numbers of total samples}, \textit{w.r.t.} the proportion of samples for $\mathcal{U}_\text{sem}$ calculation besides SE calculation, showing QwenVL results using AdVQA (left) and MME (right).}
\label{fig:fixed_total_samples}
\end{minipage}
\hfill
\begin{minipage}[t]{0.48\textwidth}
\vspace{0pt}
\centering
\includegraphics[width=0.49\linewidth]{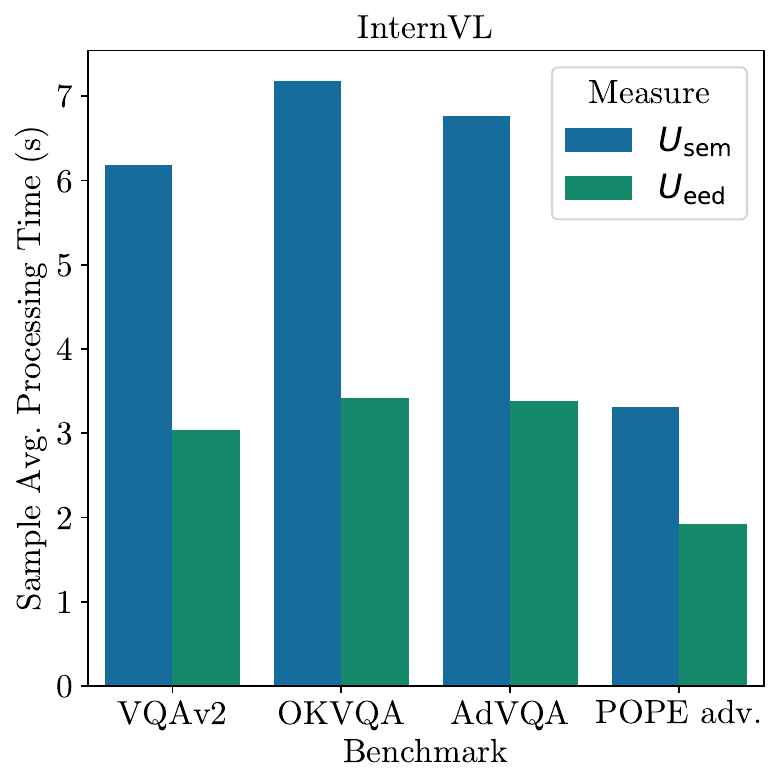}
\includegraphics[width=0.49\linewidth]{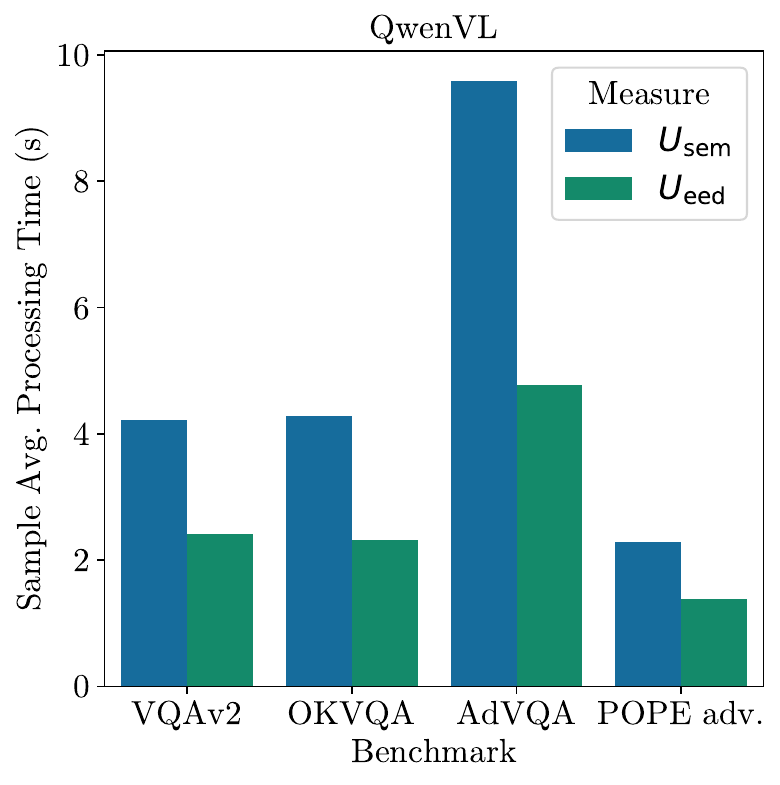}
\caption{\textbf{Sample-wise efficiency comparison for $\mathcal{U}_\text{sem}$ and $\mathcal{U}_\text{EED}$.} It takes only half time for $\mathcal{U}_\text{EED}$ to compute the uncertainty under saturated performance.}
\label{fig:sem_eed_efficiency_compare}
\end{minipage}
\vskip -0.1in
\end{figure}

\paragraph{Number of Samples.}
In Fig. \ref{fig:fixed_total_samples}, we show the optimal sampling allocation under a fixed total number $K + K'$ between estimating the original $P(s|v)$, \textit{i.e.}, for SE and $\mathcal U_{\text{sem}}$, and measuring the masked $P(s|v')$ only for $\mathcal U_{\text{sem}}$. We observe a consistent middle-peaking trend across benchmarks. The maximal performance is achieved at a balanced sampling budget of 40\%-60\% each. Biased allocations degrade performance, as insufficient samples for either distribution lead to mis-estimation of one type of uncertainty. Furthermore, while increasing the total budget universally improves AUROC, the optimal allocation proportion remains stable, suggesting its robustness in such scaling.

\begin{wrapfigure}{r}{0.40\textwidth}
\centering
\includegraphics[width=\linewidth]{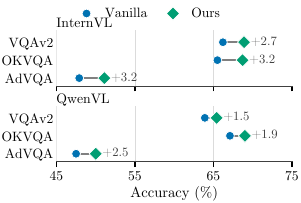}
\caption{\textbf{Hallucination mitigation performance} (Acc\%) using uncertainty values estimated by our proposed method to threshold and re-answer the query.}
\label{fig:hallucination_mitigation}
\vspace{-0.1in}
\end{wrapfigure}

\paragraph{Efficiency of EED.}
Fig. \ref{fig:sem_eed_efficiency_compare} compares the average processing time per sample between the Semantic Divergence $\mathcal U_{\text{sem}}$ and our proposed proxy $\mathcal U_{\text{EED}}$. Across both MLLM architectures and all four benchmarks, $\mathcal U_{\text{EED}}$ consistently reduces the computational cost by approximately 50\%. For instance, on the computationally demanding AdVQA benchmark with QwenVL, the processing time drops from $\sim9.6$s to $\sim4.8$s per sample.
Compared with sampling-and-clustering methods under the same generation budget, EED removes the NLI-based $\mathcal O((K+K')^2)$-time clustering for the perturbed input and operates directly in the embedding space. Appendix~\ref{sec:nli_ablation} further shows that SD is not strongly tied to one entailment model, but EED avoids that clustering dependency altogether.

\begin{wrapfigure}{r}{0.64\textwidth}
\centering
\includegraphics[width=0.49\linewidth]{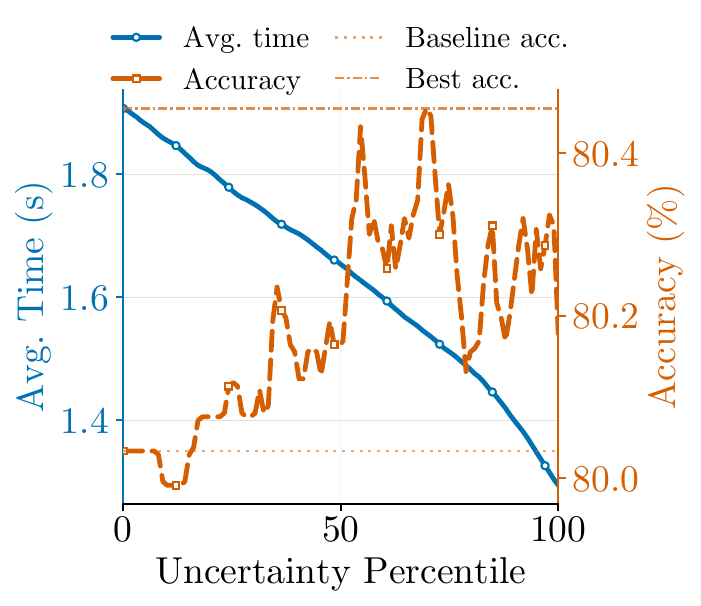}
\includegraphics[width=0.49\linewidth]{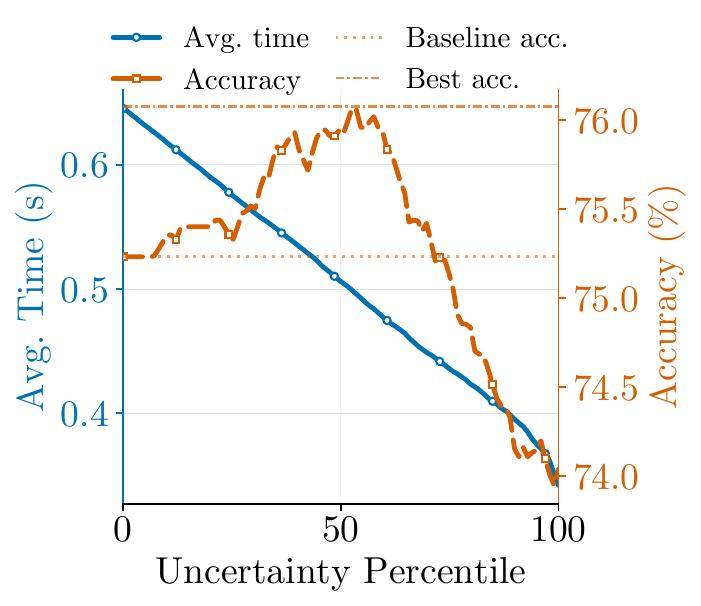}
\caption{\textbf{Adaptive-resolution VQA performance} on MME using different uncertainty thresholds as the indicator: confident samples are processed under low resolutions. Left: w/ InternVL; Right: w/ QwenVL.}
\label{fig:adaptive_resolution_boost}
\vspace{-0.1in}
\end{wrapfigure}

\paragraph{Applications.}
Beyond detection, a practical uncertainty metric should enable the model to correct its own errors. We study whether \textbf{CIM} can act as a filter to improve factual quality. In this setting, we perform a reject-and-refine strategy: if the estimated uncertainty $\mathcal{U}_{\text{sem}}$ exceeds a threshold, we instruct the model to re-generate the answer. Accuracy improvements on VQAv2, OKVQA, and AdVQA are reported in Fig. \ref{fig:hallucination_mitigation}, which demonstrates the effectiveness of such intervention. By using our uncertainty estimates to trigger re-answering, we observe consistent accuracy gains across all benchmarks.
Importantly, on the difficult \textbf{AdVQA} benchmark, the performance of InternVL improves from 47.9\% to \textbf{51.1\%}, and QwenVL from 47.5\% to \textbf{50.0\%}. This significant increase indicates that our method successfully flags epistemic failures where the model was previously hallucinating due to adversarial visual contexts.

We further apply CIM to adaptive-resolution inference, where samples with uncertainty below a percentile threshold are processed at a low maximum resolution of $256$, reserving high-resolution inference for uncertain samples.
As shown in Fig.~\ref{fig:adaptive_resolution_boost}, thresholding confident samples reduces inference time and also boosts accuracy beyond the high-resolution baseline.
For QwenVL, downscaling the most confident 50\% samples reduces average latency from 0.65s to 0.50s and raises accuracy from $\sim75.2\%$ to $\sim76.1\%$; InternVL shows similar gains around the 50th--70th percentile thresholds.
This suggests that high resolution can distract the inference on already sufficient samples, while CIM identifies when low-resolution evidence is adequate.

\section{Conclusion}
\label{sec:conclusion}

This work addresses the critical bias in existing MLLM uncertainty quantification methods toward aleatoric uncertainty, which often ignores hallucinations driven by spurious correlations. We proposed CIM, a framework that isolates epistemic uncertainty by measuring the semantic shift under causally invariant perturbations with theoretical guarantee. Our proposed Semantic Divergence effectively targets reliance on non-causal features, achieving state-of-the-art performance. Furthermore, the Expected Embedding Drift (EED) reduces computational latency by 50\% without compromising accuracy. By enabling robust hallucination detection and efficient adaptive inference, CIM enables a rigorous path toward trustworthy multimodal reasoning.

\paragraph{Limitations.}
CIM depends on the quality of the RoIs and equivalence estimation, so failures of them can affect the results. Nevertheless, we empirically show their impact is limited.

\begin{ack}
Use unnumbered first level headings for the acknowledgments. All acknowledgments
go at the end of the paper before the list of references. Moreover, you are required to declare
funding (financial activities supporting the submitted work) and competing interests (related financial activities outside the submitted work).
More information about this disclosure can be found at: \url{https://neurips.cc/Conferences/2026/PaperInformation/FundingDisclosure}.

Do {\bf not} include this section in the anonymized submission, only in the final paper. You can use the \texttt{ack} environment provided in the style file to automatically hide this section in the anonymized submission.
\end{ack}

\bibliographystyle{plainnat}
\bibliography{references}

\newpage
\appendix

{\Large \textbf{Appendix}\vspace{1.0ex}}
\section{Proofs of Equations and Propositions}

\subsection{Proof of Lemma~\ref{lemma:alertoric} (Aleatoric Cancellation)}\label{sec:proof_lemma_aleatoric}
\begin{proof}
Let $P(s) \triangleq P_\theta(s|v)$ and $Q(s) \triangleq P_\theta(s|v')$ be the distribution given the original input and the masked input $v' = T(v)$, respectively.
The log-probabilities for these inputs are
\begin{align*}
    \log Q(s) &= -(E_c(s, v_c) + E_s(s, v'_s)) - \log Z(v') \\
    \log P(s) &= -(E_c(s, v_c) + E_s(s, v_s)) - \log Z(v).
\end{align*}
Subtracting the two equations yields:
\begin{align} \label{eq:lemma3.3}
    \log \frac{Q(s)}{P(s)} &= (E_s(s, v_s) - E_s(s, v'_s)) + \log \frac{Z(v)}{Z(v')} \nonumber \\
    &= \Delta_s(s) + \log \frac{Z(v)}{Z(v')}.
\end{align}
\end{proof}

\subsection{Proof of Theorem~\ref{theorem:jensen} (Epistemic Jensen Gap)}\label{sec:proof_theorem_jensen}
\textbf{Part 1: Equivalence to Jensen Gap}
\begin{proof}
We first expand the partition function ratio. By definition, $Z(v) = \sum_y \exp(-(E_c(s, v_c) + E_s(s, v_s)))$. The exponent can be rewritten as $-E_c - E_s(v'_s) - (E_s(v_s) - E_s(v'_s)) = -E(s, v') - \Delta_s(s)$. Therefore, we can derive that
\begin{equation}\label{zv_derive_1}
Z(v) = \sum_{s} e^{-E_\theta(s, v')} e^{-\Delta_s(s)}.
\end{equation}
By definition of $Z(v')$, we have $e^{-E_\theta(s, v')}=Z(v')Q(s)$ which simplifies Eq.~\eqref{zv_derive_1} as
\begin{align}
Z(v) &= \sum_{s} Z(v')Q(s) e^{-\Delta_s(s)}\nonumber\\
&= Z(v') \cdot \mathbb{E}_{s \sim Q}[e^{-\Delta_s(s)}]
 \end{align}
Thus, the constant term from Lemma \ref{lemma:alertoric} is $\log \frac{Z(v)}{Z(v')} = \log \mathbb{E}_{y \sim Q}[e^{-\Delta_s(s)}]$.
Now, substituting Lemma \ref{lemma:alertoric} into the definition in Eq.~\eqref{eq:divergence} yields
\begin{align}
\label{eq:jensen_gap}
    \mathcal{U}_{\text{sem}} &= \mathbb{E}_{s \sim Q} \left[ \log \frac{Q(s)}{P(y)} \right] \nonumber \\
    &= \mathbb{E}_{s \sim Q} \left[ \Delta_s(s) + \log \mathbb{E}_{s' \sim Q}[e^{-\Delta_s(s')}] \right] \nonumber \\
    &= \log \mathbb{E}_{s \sim Q}[e^{-\Delta_s(s)}] - \mathbb{E}_{s \sim Q}[-\Delta_s(s)],
\end{align}
which is precisely the Jensen Gap $\mathcal{J}(e^x; -\Delta_s) \ge 0$.
The divergence is strictly zero if and only if $\Delta_s(s)$ is constant almost surely over $\mathcal{S}$, implying the model is insensitive to spurious features.
While Eq.~\eqref{eq:jensen_gap} provides an exact measure, it can be interpreted best at a low hallucination scenario, \textit{i.e.}, the model is trained well enough to produce relatively smaller $|\Delta_s(s)|$, which results in Eq.~\ref{eq:u_sem_prop_to_var}.
\end{proof}

\textbf{Part 2: Convergence to Variance (Proof of Eq.~\eqref{eq:u_sem_prop_to_var})}
\begin{proof}
Let $K_{-\Delta}(t) = \log \mathbb{E}_{s \sim Q}[e^{t(-\Delta_s(s))}]$ be the Cumulant Generating Function (CGF) of the random variable $-\Delta_s(s)$. From Theorem 1, we identify that:
\begin{equation}
    D_{KL}(Q \| P) = K_{-\Delta}(1) - \mathbb{E}_{s \sim Q}[-\Delta_s(s)] = K_{-\Delta}(1) + \mathbb{E}_{s \sim Q}[\Delta_s(s)]
\end{equation}
We apply Taylor's Theorem with the Lagrange form of the remainder to $K_{-\Delta}(t)$ on the interval $[0,1]$. There exists some $\xi \in (0,1)$ such that:
\begin{equation}
    K_{-\Delta}(1) = K_{-\Delta}(0) + K'_{-\Delta}(0)(1) + \frac{K''_{-\Delta}(0)}{2!}(1)^2 + \frac{K'''_{-\Delta}(\xi)}{3!}(1)^3
\end{equation}
Using the properties of cumulants, we have $K_{-\Delta}(0) = 0$, $K'_{-\Delta}(0) = \mathbb{E}_Q[-\Delta_s(s)]$, and $K''_{-\Delta}(0) = \mathrm{Var}_Q(-\Delta_s(s))$.
Given the assumption that the shift is bounded by $\sup |\Delta_s(s)| \le \epsilon$, the third derivative satisfies $|K^{(3)}_{-\Delta}(\xi)| \le C\epsilon^3$ for some constant $C$, effectively bounding the remainder term by $\mathcal{O}(\epsilon^3)$.
Substituting these values:
\begin{equation}
    K_{-\Delta}(1) = -\mathbb{E}_Q[\Delta_s(s)] + \frac{1}{2}\mathrm{Var}_Q(\Delta_s(s)) + \mathcal{O}(\epsilon^3)
\end{equation}
Substituting this expansion back into the KL equation, the linear expectation terms cancel:
\begin{align}
    D_{KL}(Q \| P) &= \left( -\mathbb{E}_Q[\Delta_s(s)] + \frac{1}{2}\mathrm{Var}_Q(\Delta_s(s)) + \mathcal{O}(\epsilon^3) \right) + \mathbb{E}_Q[\Delta_s(s)] \nonumber \\
    &= \frac{1}{2}\mathrm{Var}_{s \sim Q}[\Delta_s(s)] + \mathcal{O}(\epsilon^3)
\end{align}
\end{proof}

\subsection{Proof of Lemma~\ref{lem:kl_vmf} (KL Divergence of vMF Distributions)}\label{sec:proof_lemma_vmf}
\begin{proof}
By definition, the KL divergence is:
\begin{align}
    D_{KL}(Q \| P) &= \int_{\mathbb{S}^{d-1}} q(\mathbf{z}) \log \frac{q(\mathbf{z})}{p(\mathbf{z})} d\mathbf{z} \nonumber \\
    &= \int_{\mathbb{S}^{d-1}} q(\mathbf{z}) \left( \log \frac{C_d(\kappa) e^{\kappa \boldsymbol{\mu}_Q^\top \mathbf{z}}}{C_d(\kappa) e^{\kappa \boldsymbol{\mu}_P^\top \mathbf{z}}} \right) d\mathbf{z} \nonumber \\
    &= \int_{\mathbb{S}^{d-1}} q(\mathbf{z}) \left( \kappa \boldsymbol{\mu}_Q^\top \mathbf{z} - \kappa \boldsymbol{\mu}_P^\top \mathbf{z} \right) d\mathbf{z} \nonumber \\
    &= \kappa (\boldsymbol{\mu}_Q - \boldsymbol{\mu}_P)^\top \int_{\mathbb{S}^{d-1}} \mathbf{z} q(\mathbf{z}) d\mathbf{z}.
\end{align}
A property of the vMF distribution is that the expected vector is $\mathbb{E}_{\mathbf{z} \sim Q}[\mathbf{z}] = A_d(\kappa) \boldsymbol{\mu}_Q$. Substituting this into the integral:
\begin{align}
    D_{KL}(Q \| P) &= \kappa (\boldsymbol{\mu}_Q - \boldsymbol{\mu}_P)^\top (A_d(\kappa) \boldsymbol{\mu}_Q) \nonumber \\
    &= \kappa A_d(\kappa) (\boldsymbol{\mu}_Q^\top \boldsymbol{\mu}_Q - \boldsymbol{\mu}_P^\top \boldsymbol{\mu}_Q).
\end{align}
Since $\boldsymbol{\mu}_Q$ is a unit vector, $\boldsymbol{\mu}_Q^\top \boldsymbol{\mu}_Q = 1$. Thus, $D_{KL}(Q \| P) = \kappa A_d(\kappa) (1 - \boldsymbol{\mu}_P^\top \boldsymbol{\mu}_Q)$.
\end{proof}

\subsection{Proof of Theorem~\ref{thm_eed} (Geometric Proxy Equivalence)}\label{sec:proof_theorem_eed}
\begin{proof}
The EED metric computes the expectation of the cosine distance over the sampled trajectory from $Q$. Taking the expectation of Eq.~\eqref{eq:u_eed} with respect to $y_t \sim Q$:
\begin{align}
    \mathbb{E}_{y_t \sim Q}[\mathcal{U}_{\text{EED}}] &= 1 - \mathbb{E}_{y_t \sim Q}[\mathbf{e}_{y_t}]^\top \boldsymbol{\mu}_{P} \nonumber\\
    &= 1 - (A_d(\kappa)\boldsymbol{\mu}_Q)^\top \boldsymbol{\mu}_P.
\end{align}
Rearranging terms for $\boldsymbol{\mu}_Q^\top \boldsymbol{\mu}_P$ and substituting into Lemma \ref{lem:kl_vmf}:
\begin{align}
    D_{KL}(Q \| P) &= \kappa A_d(\kappa) \left( 1 - \frac{1 - \mathbb{E}[\mathcal{U}_{\text{EED}}]}{A_d(\kappa)} \right) \nonumber\\
    &= \kappa \mathbb{E}_{y_t \sim Q}[\mathcal{U}_{\text{EED}}] - \kappa (1 - A_d(\kappa)).
\end{align}
The divergence relates to EED via an affine transformation. The term $\kappa (1 - A_d(\kappa))$ is a constant dependent only on concentration $\kappa$ and dimension $d$. Under high concentration ($\kappa \to \infty$), $A_d(\kappa) \to 1$, and the relation simplifies to exact linearity. Thus, minimizing EED strictly minimizes the divergence between the semantic distributions.
\end{proof}

\section{Detailed Experiment Setup}
\label{sec:exp_setup_details}

\paragraph{Computational Resources}
We carried out all experiments on 4 NVIDIA RTX A6000 GPUs. 

\paragraph{Model Parameters}
The implementation uses an inference temperature of 0.1, while the sampling temperature is 0.5. The RoI transform uses GroundingDINO with detection threshold 0.3, which filters boxes labeled as the whole image when they cover more than 95\% of the image area and uses a central crop fallback when no valid detection remains.

\paragraph{Prompts}
The dataset question is passed to the evaluated MLLM with this prompt:
\begin{verbatim}
[Question] Answer this question shortly. 
\end{verbatim}
The prompt used to extract key objects for grounding is:
\begin{verbatim}
What are the key objects in this question-answer pair?
Q: [Question]
A: [Answer]
Output in json format: {"key_objects": ["key object 1", "key object 2", ...]}
\end{verbatim}
where the placeholders are filled with the original question and the model's original answer, and the judge LLM is queried with temperature $0.1$. The prompt used for pairwise semantic clustering of sampled answers is:
\begin{verbatim}
Question: [Question] True Answer: [Answer A] Is the following answer correct?
Answer: [Answer B] Respond with 'no' if this is totally different from the
True Answer. Respond with 'yes' when they are of the same meaning.
\end{verbatim}
This clustering prompt is filled with the text query and the answer pairs, using a temperature of $0.1$.

\paragraph{Causal-Invariant Transform.}
We use GroundingDINO as a lightweight RoI generator for the causal-invariant transform $T(\cdot)$. When no valid RoI is found, we use a central crop as the fallback view. This choice is not essential to the framework: task-specific or query-conditioned detectors can be substituted when they are available.

\paragraph{Dataset Configuration.}
To keep evaluation consistent and computationally feasible, we fix the evaluation subsets across all methods. For hallucination-targeted benchmarks, we use the full test sets: 3,000 POPE examples across the random, popular, and adversarial splits, and 2,374 MME examples. For VQAv2, OKVQA, AdVQA, and GQA, we use the first 2,000 validation examples from each dataset.

\paragraph{Sampling Budget and Fair Comparison.}
For all sampling-based methods, we fix the total decoding budget to $K_{\text{total}}=32$ sequences per query.
\begin{itemize}
    \item \textbf{Baselines:} SE, EigenScore, and UMPIRE sample only from the original distribution $P(y|v)$, so we set $K=32$.
    \item \textbf{Ours (CIM):} CIM samples from both the original distribution $P(y|v)$ and the causally-filtered distribution $P(y|v')$. We use $K=16$ original-view samples and $K'=16$ filtered-view samples, keeping $K+K'=32$.
\end{itemize}

\paragraph{Metric Interpolation.}
The $K=16$ original-view samples used by CIM can also estimate standard Semantic Entropy without extra decoding. We therefore report a mixed score
$\mathcal{U}_{\text{final}} = \lambda\mathcal{U}_{\text{ours},K'=16} + (1 - \lambda) \cdot \mathcal{U}_{\text{SE},K=16}$,
where $\mathcal{U}_{\text{ours}}$ is either $\mathcal{U}_{\text{sem}}$ or $\mathcal{U}_{\text{EED}}$. Here, $\lambda$ is set to $0.05$. This score retains sensitivity to aleatoric uncertainty while adding the epistemic signal measured by CIM.

\section{More Experiments and Results}
\label{sec:more_exp_results}
\subsection{Additional MME Results}
\label{sec:mme_results}

\begin{figure}[t]
\vskip 0.2in
\begin{center}
\centerline{
\hfill
\includegraphics[width=0.6\textwidth]{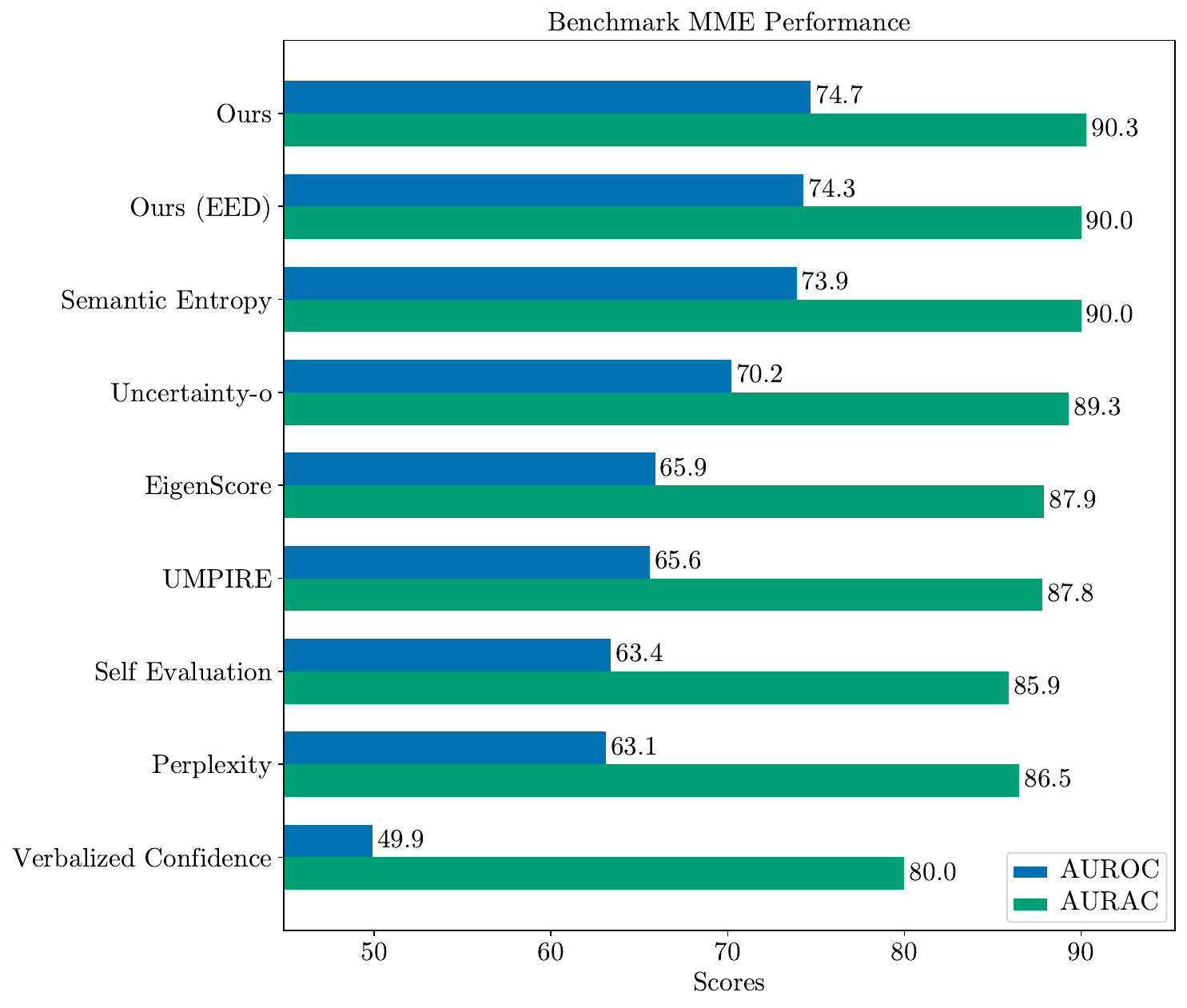}
\hfill
}
\caption{\textbf{Uncertainty estimation performance} (\%) on the MME benchmark.}
\label{fig:MME_performance}
\end{center}
\vskip -0.2in
\end{figure}

Figure~\ref{fig:MME_performance} reports hallucination detection on MME. CIM achieves the best performance (74.7\%), showing that the uncertainty signal generalizes beyond the VQA datasets in the main tables.

\subsection{Case Study}
\label{sec:case_study}

\begin{figure*}[!h]
\centering
\includegraphics[width=0.99\textwidth]{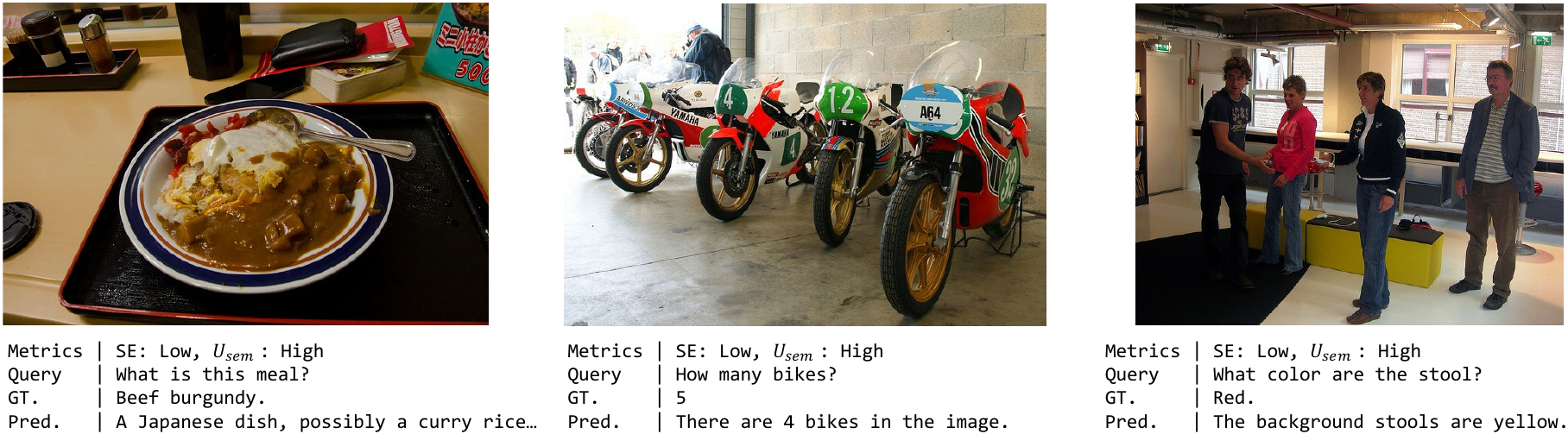}
\caption{\textbf{Representative instances of SE vs. $\mathcal{U}_{\text{sem}}$ on hallucinated samples.} In all instances, our proposed CIM uncertainty is high despite the standard SE remains low, indicating epistemic uncertainty that common sampling-based uncertainty metric fails to identify.}
\label{fig:qualitative_cases}
\end{figure*}

To better understand the behavior of CIM in practice, we present a case analysis of SE's failure cases in Fig. \ref{fig:qualitative_cases}. As illustrated, the MLLM frequently generates incorrect answers with high confidence, leading to extremely low SE scores. For instance, in the left example, the model consistently mispredicts `Japanese dish,' likely misled by background Japanese text. Similarly, in the right example, the model hallucinates the color possibly by looking at an incorrect object. Empirically, counting complex objects can often hallucinate but result in a stable number or low semantic entropy, as shown in the middle instance. In these scenarios, standard metrics fail but the proposed method consistently peaks. Such behavior confirms the successful detection of epistemic failure with our CIM.

\subsection{Gaussian Noise Perturbation}
\label{sec:gaussian_noise_ablation}

We compare RoI masking with the commonly used gaussian noise perturbation under the same VQAv2 and AdVQA settings as the main masking ablation. Table~\ref{tab:gaussian_noise_ablation} reports the final mixed uncertainty score. Gaussian noise perturbs the entire image and therefore cannot selectively preserve $x_c$ while suppressing $x_s$.
Its performance stays close to random masking and below RoI masking across benchmarks. This supports the main conclusion that the improvement is not caused by simply changing the image, but by constructing an evidence-preserving view that removes non-causal context.

\begin{table}[!h]
\begin{center}
\caption{\textbf{Gaussian noise perturbation ablation.} Results are AUROC/AURAC on VQAv2 and AdVQA.}
\label{tab:gaussian_noise_ablation}
\vskip 0.1in
\begin{small}
\begin{sc}
\resizebox{0.85\textwidth}{!}{
\begin{tabular}{lcccccccc}
\toprule
Benchmark & \multicolumn{4}{c}{VQAv2} & \multicolumn{4}{c}{AdVQA} \\
MLLM & \multicolumn{2}{c}{InternVL} & \multicolumn{2}{c}{QwenVL} & \multicolumn{2}{c}{InternVL} & \multicolumn{2}{c}{QwenVL} \\
Metric & AUROC & AURAC & AUROC & AURAC & AUROC & AURAC & AUROC & AURAC \\
\midrule
Random Masking & 80.2 & 83.6 & 77.9 & 81.6 & 75.7 & 66.7 & 71.9 & 61.5 \\
Gaussian Noise & 80.3 & 84.0 & 77.8 & 80.7 & 75.5 & 66.5 & 73.4 & 63.0 \\
Reversed Masking & 77.6 & 81.5 & 76.1 & 79.2 & 74.8 & 65.0 & 71.9 & 61.3 \\
Ours (RoI Masking) & \textbf{81.3} & \textbf{85.2} & \textbf{78.5} & \textbf{81.9} & \textbf{76.8} & \textbf{66.9} & \textbf{74.5} & \textbf{64.8} \\
\bottomrule
\end{tabular}
}
\end{sc}
\end{small}
\vskip -0.1in
\end{center}
\end{table}

\subsection{Detector Failure Sensitivity}
\label{sec:detector_failure_sensitivity}

We next test how performance changes when RoI quality is deliberately degraded. Starting from GT boxes on GQA, we replace an increasing fraction of boxes with random crops. We evaluate and show the uncertainty detection performance in Table~\ref{tab:gqa_failure_sensitivity}.

\begin{table}[!h]
\begin{center}
\caption{\textbf{GQA detector-failure sensitivity.} A higher failure ratio means that more GT RoIs are replaced by random crops.}
\label{tab:gqa_failure_sensitivity}
\vskip 0.1in
\begin{small}
\begin{sc}
\resizebox{0.6\textwidth}{!}{
\begin{tabular}{lcc}
\toprule
Failure Ratio (\%) & InternVL AUROC & QwenVL AUROC \\
\midrule
0 & 72.1 & 66.3 \\
20 & 69.4 & 62.8 \\
40 & 67.1 & 61.4 \\
60 & 65.1 & 59.7 \\
80 & 63.3 & 58.7 \\
100 & 61.0 & 57.5 \\
\bottomrule
\end{tabular}
}
\end{sc}
\end{small}
\vskip -0.1in
\end{center}
\end{table}

The AUROC drops steadily as valid RoIs are replaced, confirming that CIM relies on preserving causal evidence rather than arbitrary perturbations such as random cropping. Meanwhile, In the actual pipeline, the detector fallback rate is low across benchmarks, as shown in Table~\ref{tab:grounding_failure_rate}.

\begin{table}[!h]
\begin{center}
\caption{\textbf{GroundingDINO fallback rate.} Fallback means no valid RoI is detected and the central crop is used.}
\label{tab:grounding_failure_rate}
\vskip 0.1in
\begin{small}
\begin{sc}
\resizebox{0.55\textwidth}{!}{
\begin{tabular}{lcccc}
\toprule
Benchmark & VQAv2 & OKVQA & AdVQA & MME \\
\midrule
Failure (\%) & 1.10 & 0.62 & 1.52 & 1.54 \\
\bottomrule
\end{tabular}
}
\end{sc}
\end{small}
\vskip -0.1in
\end{center}
\end{table}

\subsection{Relational and Global GQA Questions}
\label{sec:gqa_rel_global_analysis}

A concern with RoI cropping is that it may remove relationships or scene context. Our implementation uses the union box over detected query-relevant objects, so the spatial region between objects is preserved. We evaluate this directly on GQA by comparing common questions with relational/global questions.

\begin{table}[!h]
\begin{center}
\caption{\textbf{GQA relational/global analysis.} AUROC on common questions and complex questions involving relational or global reasoning.}
\label{tab:gqa_rel_global}
\vskip 0.1in
\begin{small}
\begin{sc}
\resizebox{0.65\textwidth}{!}{
\begin{tabular}{lcccc}
\toprule
Setting & \multicolumn{2}{c}{GQA-common} & \multicolumn{2}{c}{GQA-rel-global} \\
Metric & InternVL & QwenVL & InternVL & QwenVL \\
\midrule
SE & 78.4 & 69.2 & 79.2 & 69.4 \\
Ours ($K+K'$) & \textbf{79.9} & \textbf{70.2} & \textbf{80.2} & \textbf{71.5} \\
\bottomrule
\end{tabular}
}
\end{sc}
\end{small}
\vskip -0.1in
\end{center}
\end{table}

From the results in Table~\ref{tab:gqa_rel_global}, we can see the gain is still maintained when testing relational/global questions, suggesting that the union box in our implementation keeps enough contextual structure for CIM to remain effective.

\subsection{Alternative Grounding with Florence-2}
\label{sec:florence_detector}

CIM does not require GroundingDINO or a union box specifically. To test whether the framework is sensitive to the detector choice, we replace GroundingDINO with the query-conditioned Florence-2 detector~\cite{xiao2024florence}.

\begin{table}[!h]
\begin{center}
\caption{\textbf{Query-conditioned detector analysis.} Results are AUROC/AURAC. GD denotes GroundingDINO and FL denotes Florence-2.}
\label{tab:florence_detector_ablation}
\vskip 0.1in
\begin{small}
\begin{sc}
\resizebox{0.8\textwidth}{!}{
\begin{tabular}{lcccc}
\toprule
Benchmark & InternVL (GD) & InternVL (FL) & QwenVL (GD) & QwenVL (FL) \\
\midrule
VQAv2 & 81.3/85.2 & \textbf{82.2}/\textbf{85.9} & \textbf{78.4}/81.5 & 77.2/\textbf{82.6} \\
AdVQA & 76.8/66.9 & \textbf{77.7}/\textbf{68.3} & \textbf{74.5}/64.8 & 73.5/\textbf{65.0} \\
\bottomrule
\end{tabular}
}
\end{sc}
\end{small}
\vskip -0.1in
\end{center}
\end{table}

In the results shown in Table~\ref{tab:florence_detector_ablation}, the changes are small, indicating that CIM mainly depends on preserving query-relevant evidence and is not tied to a particular RoI detector.

\subsection{Empirical Validations for Energy Assumptions}
\label{sec:energy_shift_scale}

Practical causal energy could shift and spurious elements might remain. To further verify the influence of the decomposition assumption empirically, we evaluated logit causal energy by comparing our crops against vacant-background views whose area outside the crop is masked to black. We estimate causal and non-causal logit-energy shifts and compare their median magnitudes with the median top-1/top-2 logit gap. The results in Table~\ref{tab:causal_energy_shift} show that the shift is relatively small, validating the decomposition accuracy.

\begin{table}[!h]
\begin{center}
\caption{\textbf{Causal-energy shift scale.} Median $\|\Delta_c\|$ compared with median logit top-1/top-2 gap.}
\label{tab:causal_energy_shift}
\vskip 0.1in
\begin{small}
\begin{sc}
\resizebox{0.55\textwidth}{!}{
\begin{tabular}{lcccc}
\toprule
Benchmark & \multicolumn{2}{c}{InternVL} & \multicolumn{2}{c}{QwenVL} \\
 & $\|\Delta_c\|$ & Logit Gap & $\|\Delta_c\|$ & Logit Gap \\
\midrule
OKVQA & 0.75 & 3.62 & 0.38 & 4.21 \\
AdVQA & 0.50 & 4.57 & 0.50 & 4.81 \\
\bottomrule
\end{tabular}
}
\end{sc}
\end{small}
\vskip -0.1in
\end{center}
\end{table}

To validate the bounded non-causal energy in Theorem~\ref{theorem:jensen}, we empirically evaluated its magnitude, i.e., the logit difference between original and masked views, on hallucinated samples. The results are shown in Table~\ref{tab:noncausal_energy_shift}. Note that this value is accurate under our causal-invariance assumption. For scale anchoring, we compared $|\Delta_s|$ against the median top-1/top-2 logit gap. The median $|\Delta_s|$ (0.50-1.00) is consistently much smaller than the median decision margin (3.62-4.81). Based on these results, the theoretical bound $\epsilon$ can be estimated at $\epsilon \approx 1.0$. This value is sufficiently small relative to the logit scale, confirming our theoretical approximation holds in practice.

\begin{table}[!h]
\begin{center}
\caption{\textbf{Non-causal shift scale.} Median $\|\Delta_s\|$ on hallucinated samples compared with median logit top-1/top-2 gap.}
\label{tab:noncausal_energy_shift}
\vskip 0.1in
\begin{small}
\begin{sc}
\resizebox{0.55\textwidth}{!}{
\begin{tabular}{lcccc}
\toprule
Benchmark & \multicolumn{2}{c}{InternVL} & \multicolumn{2}{c}{QwenVL} \\
 & $\|\Delta_s\|$ & Logit Gap & $\|\Delta_s\|$ & Logit Gap \\
\midrule
OKVQA & 0.75 & 3.62 & 0.50 & 4.21 \\
AdVQA & 1.00 & 4.57 & 0.62 & 4.81 \\
\bottomrule
\end{tabular}
}
\end{sc}
\end{small}
\vskip -0.1in
\end{center}
\end{table}

Both $\|\Delta_c\|$ and $\|\Delta_s\|$ are substantially smaller than the corresponding decision margins. These estimates support the small-shift approximation in Theorem~\ref{theorem:jensen} and suggest that causal-energy leakage remains limited when query-relevant regions are preserved.

\subsection{POPE Entropy Distributions}
\label{sec:pope_entropy}

Table~\ref{tab:pope_entropy} reports the average token-level entropy during decoding on POPE. QwenVL has a much sharper output distribution than InternVL, which helps explain why discrete semantic clustering can be less stable on QwenVL and why EED is particularly effective for this model.

\begin{table}[!h]
\begin{center}
\caption{\textbf{Average token-level entropy on POPE.}}
\label{tab:pope_entropy}
\vskip 0.1in
\begin{small}
\begin{sc}
\resizebox{0.48\textwidth}{!}{
\begin{tabular}{lccc}
\toprule
MLLM & Random & Popular & Adversarial \\
\midrule
InternVL-8B & 0.51 & 0.51 & 0.54 \\
Qwen2.5-VL-7B & 0.06 & 0.07 & 0.08 \\
\bottomrule
\end{tabular}
}
\end{sc}
\end{small}
\vskip -0.1in
\end{center}
\end{table}

\subsection{NLI Model Ablation}
\label{sec:nli_ablation}

\begin{table}[!h]
\begin{center}
\caption{\textbf{NLI model ablation.} $\Delta(\text{SD Qwen}, \text{SD DeBERTa})$ represents the metric value difference between "SD w/ Qwen-7B" and "SD w/ DeBERTa-small".}
\label{tab:nli_ablation}
\vskip 0.1in
\begin{small}
\begin{sc}
\resizebox{0.6\textwidth}{!}{
\begin{tabular}{lcccc}
\toprule
Metric & \multicolumn{2}{c}{VQAv2} & \multicolumn{2}{c}{OKVQA} \\
 & AUROC & AURAC & AUROC & AURAC \\
\midrule
SE w/ DeBERTa-small & 74.5 & 78.0 & 77.5 & 83.5 \\
SD w/ DeBERTa-small & 76.2 & 80.4 & 78.6 & 84.4 \\
SE w/ Qwen-7B & 76.5 & 79.5 & 78.6 & 83.8 \\
SD w/ Qwen-7B & \textbf{78.4} & \textbf{81.5} & \textbf{79.6} & 84.2 \\
$\Delta$(SD Qwen, SD DeBERTa) & 2.2 & 1.1 & 1.0 & -0.2 \\
\bottomrule
\end{tabular}
}
\end{sc}
\end{small}
\vskip -0.1in
\end{center}
\end{table}

\begin{figure}[!h]
\vskip 0.2in
\begin{center}
\centerline{
\hfill
\includegraphics[width=0.3\textwidth]{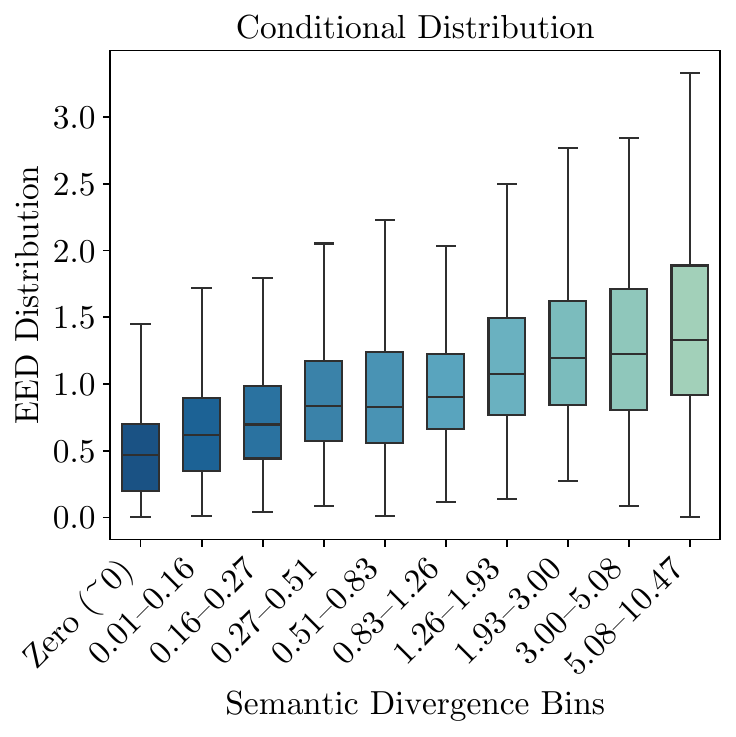}
\hfill
\includegraphics[width=0.31\textwidth]{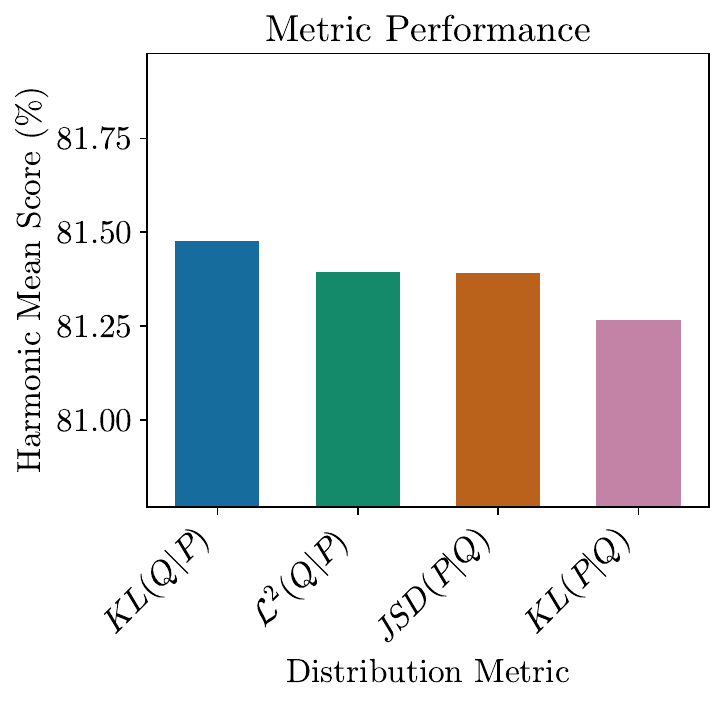}
\hfill
}
\caption{\textbf{Left:} Expected Embedding Drift (EED) as a proxy for Semantic Divergence. \textbf{Right:} Harmonic mean of AUROC and AURAC for different distribution distances, averaged across 2 MLLMs and 7 benchmarks.}
\label{fig:combined_eed_and_distances}
\end{center}
\vskip -0.2in
\end{figure}

Semantic clustering depends on an entailment model, so we compare the default Qwen-7B entailment model with a much smaller DeBERTa-small model~\cite{hedeberta}. Table~\ref{tab:nli_ablation} shows that the weaker entailment model slightly changes performance, but SD remains competitive and continues to outperform SE under the same NLI model.

\subsection{EED Proxy and Distance Metrics}
\label{sec:eed_distance_analysis}

Figure~\ref{fig:combined_eed_and_distances} left validates the connection between EED and Semantic Divergence by grouping samples according to $\mathcal U_{\text{sem}}$ and plotting the corresponding EED values. The mean EED increases with Semantic Divergence, supporting EED as an efficient proxy for semantic distribution shift.

Figure~\ref{fig:combined_eed_and_distances} right compares squared $L_2$ distance, Jensen-Shannon Divergence, reverse KL $D_{KL}(P\|Q)$, and forward KL $D_{KL}(Q\|P)$. The method is not highly sensitive to the exact distance, but forward KL performs best in our experiments, consistent with its role as an information-gain measure from the causally-filtered view to the original view.

\section{Licenses for Existing Assets}
\label{sec:asset_licenses}

We use existing public datasets, models, and detector assets for research evaluation only and do not redistribute their data or checkpoints. We cite the original creators in the main paper and follow the licenses or terms listed by their official project pages:
\begin{itemize}
    \item \textbf{VQAv2}~\cite{goyal2017making}: the VQA Consortium states that VQA annotations are licensed under Creative Commons Attribution 4.0 International, and its software is distributed under BSD-style redistribution terms; see \url{https://visualqa.org/terms.html}. The associated images are COCO images and remain subject to the original COCO/image-source terms.
    \item \textbf{OK-VQA}~\cite{marino2019ok}: the official download page provides OK-VQA annotations/questions and COCO images, but does not state a separate OK-VQA annotation license; see \url{https://okvqa.allenai.org/download.html}. We therefore use the benchmark under the official access terms and the underlying COCO image terms.
    \item \textbf{AdVQA}~\cite{li2021adversarial}: the official release page lists the AdVQA/AVQA data and image sources, but does not state a separate dataset license; see \url{https://adversarialvqa.org/download.html}. We use it under the official access terms and the source-image terms for COCO, Conceptual Captions, Fakeddit, and VCR as applicable.
    \item \textbf{GQA}~\cite{hudson2019gqa}: the official GQA website provides images that come from COCO and Flickr and scene graphs based on Visual Genome; see \url{https://cs.stanford.edu/people/dorarad/gqa/index.html} and \url{https://cs.stanford.edu/people/dorarad/gqa/download.html}. We could not find a separate license statement; the LMMs-Lab Hugging Face mirror lists MIT, but we treat the original COCO, Flickr, and Visual Genome terms as applicable to them.
    \item \textbf{POPE}~\cite{li2023evaluating}: the official RUCAIBox/POPE repository is released under the MIT license; see \url{https://github.com/RUCAIBox/POPE}. Its COCO-derived components also inherit the relevant COCO image/annotation terms.
    \item \textbf{MME}~\cite{fu2025mme}: the project page links to the Hugging Face dataset release, which lists Apache-2.0 as the dataset license; see \url{https://github.com/BradyFU/Awesome-Multimodal-Large-Language-Models/tree/Evaluation} and \url{https://huggingface.co/datasets/darkyarding/MME}.
    \item \textbf{InternVL2-8B}~\cite{chen2024far}: the official Hugging Face model card lists an MIT license and notes that its InternLM2.5-7B-Chat component is licensed under Apache-2.0; see \url{https://huggingface.co/OpenGVLab/InternVL2-8B}.
    \item \textbf{Qwen2.5-VL-7B-Instruct} and \textbf{Qwen2.5-7B-Instruct}~\cite{bai2025qwen25vltechnicalreport}: their official Hugging Face model cards list Apache-2.0; see \url{https://huggingface.co/Qwen/Qwen2.5-VL-7B-Instruct} and \url{https://huggingface.co/Qwen/Qwen2.5-7B-Instruct}.
    \item \textbf{GroundingDINO}~\cite{liu2024grounding}: the official repository license file is Apache-2.0; see \url{https://github.com/IDEA-Research/GroundingDINO/blob/main/LICENSE}.
    \item \textbf{Florence-2}~\cite{xiao2024florence}: the official Microsoft Hugging Face model card lists an MIT license; see \url{https://huggingface.co/microsoft/Florence-2-large}.
    \item \textbf{DeBERTa-small}~\cite{hedeberta}: the Microsoft DeBERTa repository and the \texttt{microsoft/deberta-v3-small} Hugging Face checkpoint list the MIT license; see \url{https://github.com/microsoft/DeBERTa/blob/master/LICENSE} and \url{https://huggingface.co/microsoft/deberta-v3-small}.
\end{itemize}

\end{document}